\documentclass{article}

\usepackage{microtype}
\usepackage{graphicx}
\usepackage{subcaption}
\usepackage{booktabs} 

\usepackage{hyperref}

\usepackage[preprint]{icml2026}

\usepackage{amsmath}
\usepackage{amssymb}
\usepackage{algorithm}
\usepackage{algorithmic}
\usepackage{multirow}
\usepackage{mathtools}
\usepackage{amsthm}
\usepackage{adjustbox}
\usepackage{xcolor}
\usepackage{mdframed}
\newmdenv[
  backgroundcolor=blue!5,
  linecolor=blue,
  linewidth=1pt,
  roundcorner=5pt,
  nobreak=true
]{summarybox}

\theoremstyle{plain}
\newtheorem{theorem}{Theorem}[section]
\newtheorem{proposition}[theorem]{Proposition}
\newtheorem{lemma}[theorem]{Lemma}
\newtheorem{corollary}[theorem]{Corollary}
\theoremstyle{definition}

\newtheorem{assumption}[theorem]{Assumption}
\theoremstyle{remark}

\usepackage[capitalize,noabbrev]{cleveref}

\crefname{appendix}{App.}{Apps.}
\Crefname{appendix}{App.}{Apps.}

\crefname{corollary}{Cor.}{Cors.}
\Crefname{corollary}{Cor.}{Cors.}
\crefname{section}{Sec.}{Secs.}
\Crefname{section}{Sec.}{Secs.}
\crefname{proposition}{Prop.}{Props.}
\Crefname{proposition}{Prop.}{Props.}
\crefname{equation}{Eq.}{Eqs.}
\Crefname{equation}{Eq.}{Eqs.}

\crefname{assumption}{Assumption}{Assumptions}
\Crefname{assumption}{Assumption}{Assumptions}

\usepackage[textsize=tiny]{todonotes}

\icmltitlerunning{Generation Order and Parallel Decoding in MDMs: An Information-Theoretic Perspective}

\begin{document}

\twocolumn[
  \icmltitle{Generation Order and Parallel Decoding in Masked Diffusion Models: \\ An Information-Theoretic Perspective}



  \icmlsetsymbol{equal}{*}
  \icmlsetsymbol{advising}{$\dagger$}

  \begin{icmlauthorlist}
    \icmlauthor{Shaorong Zhang}{ucr}
    \icmlauthor{Longxuan Yu}{ucr}
    \icmlauthor{Rob Brekelmans}{}
    \icmlauthor{Luhan Tang}{ucr}
    \icmlauthor{Salman Asif}{advising,ucr}
    \icmlauthor{Greg Ver Steeg}{advising,ucr}
  \end{icmlauthorlist}

  \icmlaffiliation{ucr}{University of California, Riverside, CA, US}


  \icmlcorrespondingauthor{Greg Ver Steeg}{gregoryv@ucr.edu}

  \icmlkeywords{Machine Learning, ICML}

  \vskip 0.3in
]

  \printAffiliationsAndNotice{\icmlEqualContribution \quad $\dagger$Equal advising.}




\begin{abstract}
Masked Diffusion Models (MDMs) significantly accelerate inference by trading off sequential determinism. However, the theoretical mechanisms governing generation order and the risks inherent in parallelization remain under-explored. In this work, we provide a unified information-theoretic framework to decouple and analyze two fundamental sources of failure: \textit{order sensitivity} and \textit{parallelization bias}. Our analysis yields three key insights: (1) The benefits of \textit{Easy-First} decoding (prioritizing low-entropy tokens) are magnified as model error increases; (2) factorized parallel decoding introduces intrinsic sampling errors that can lead to arbitrary large Reverse KL divergence, capturing ``incoherence'' failures that standard Forward KL metrics overlook; and (3) while verification can eliminate sampling error, it incurs an exponential cost governed by the total correlation within a block. Conversely, heuristics like remasking, though computationally efficient, cannot guarantee distributional correctness. Experiments on a controlled Block-HMM and large-scale MDMs (LLaDA) for arithmetic reasoning validate our theoretical framework.
\end{abstract}




\section{Introduction}
\label{sec:intro}

Autoregressive (AR) models have established themselves as the dominant paradigm for discrete sequence generation, powering breakthroughs in natural language processing \citep{achiam2023gpt, touvron2023llama} and biological sequence design \citep{madani2023large}. By decomposing the joint probability of a sequence into a chain of conditional probabilities, these models allow for tractable training and high-fidelity generation. However, this factorization imposes a strict sequential constraint: standard decoding proceeds token-by-token from left to right, resulting in inference latency that scales linearly with sequence length.

To dismantle this computational bottleneck, recent research has pivoted toward {parallel generation strategies}. An initial line of work focuses on \emph{Speculative Decoding} \citep{leviathan2023fast, chen2023accelerating}. However, the efficiency of speculative methods is strictly capped by the \textit{draft acceptance rate}, which often stagnates in complex reasoning tasks or when the draft-oracle gap is large \citep{wen2024speculative}.

To achieve more aggressive speedups, a more radical shift has emerged toward {fully parallel and non-autoregressive frameworks}, such as \emph{Masked Diffusion Models (MDMs)} \citep{sahoo2024simple, nie2025large} and \emph{Discrete Flow Matching} \citep{campbell2024generative}. While these models bypass the serial verification bottleneck of speculative decoding, they introduce two unaddressed theoretical risks: (1) \textbf{Order Sensitivity}: Due to model approximation errors, the magnitude of error accumulation becomes highly dependent on the generation order. (2) \textbf{Parallelization Bias}: parallelization itself introduces a structural error; even with a perfect model, the resulting decoding distribution diverges from the ground truth. This leads to the generation of individually likely but jointly impossible sequences, a phenomenon we term incoherence or hallucinations.


A variety of heuristic solutions have been proposed, such as confidence-based masking \citep{chang2022maskgit}, entropy-based scheduling \citep{peng2025path}, and margin-based methods \citep{kim2025train}. These approaches typically rely on what we term the \textbf{Easy-first principle}---prioritizing the generation of tokens with low uncertainty. However, the field lacks a unified theoretical framework to separate and characterize the distinct sources of failure in these strategies, especially as they typically intertwine order scheduling with parallel decoding. In particular, two fundamental questions remain open: 
\emph{Can the widely-used ``Easy-first" principle be justified theoretically?} and \emph{To what extent can the errors induced by parallel decoding be mitigated or eliminated?}.




\textbf{Contributions.}
This work provides an information-theoretic analysis of how generation order and parallelization affect distributional correctness in MDMs. Our main contributions are:

\begin{itemize}
    \item \textbf{Generation order matters under model error.}
    We show that under imperfect conditional models, the divergence between the induced and target joint distributions is explicitly order-dependent due to rollout-induced covariate shift. Our analysis indicates that Easy-First ordering yields greater benefits as model error increases, due to the amplified effect of early conditional inaccuracies under forward KL divergence.
    
    \item \textbf{Parallel decoding introduces intrinsic sampling error.}
We compare forward KL, reverse KL, and incoherence probability as metrics for evaluating parallel decoding. We show that sampling error in factorized parallel decoding is unavoidable:
exact elimination requires verification with expected cost exponential in forward KL, while heuristic corrections, though computationally efficient, in general cannot guarantee distributional correctness and totally avoid incoherence.

\item \textbf{Empirical validation}. Experiments on a fully specified Block-HMM confirm that parallel mean-field decoding can exhibit high reverse KL and incoherence despite modest forward KL. Additional results on arithmetic reasoning and masked diffusion decoding demonstrate that Easy-First schedules substantially improve accuracy, especially in high-error regimes.

\end{itemize}

\section{Background and Problem Setup}
\label{sec:setup}


\subsection{Autoregressive Models and Decoding}
\label{sec:setup_ar}

We consider a length-$T$ discrete sequence $X_{1:T}=(X_1,\ldots,X_T)$ over a finite alphabet $\mathcal{V}$, distributed according to an unknown data distribution $p_{\mathrm{data}}$.
Autoregressive (AR) models provide a convenient reference point by parameterizing a joint distribution via the chain rule:
\begin{equation}
p_\theta(x_{1:T})
=
\prod_{t=1}^T p_\theta(x_t \mid x_{<t}),
\label{eq:setup_ar_factorization}
\end{equation}
where $x_{<t}:=(x_1,\ldots,x_{t-1})$ denotes the realized prefix.

Decoding refers to the procedure used to sample from the model conditionals.
While standard AR decoding is sequential, recent methods aim to reduce latency by generating multiple variables in parallel.
A common abstraction is \emph{blockwise decoding}, which partitions positions into blocks and generates all tokens in a block simultaneously, conditioned on tokens generated in previous blocks.
Depending on the algorithm, intra-block dependencies may be ignored, approximated, or corrected through additional steps such as accept--reject or iterative refinement.

Our analysis concerns the distributional consequences of decoding strategies rather than model architectures.
The same trained model can induce different output distributions under different decoding procedures.


\subsection{Error Sources and Scope}
\label{sec:setup_scope}

We distinguish two sources of generation error: model error ($p_\theta \approx p_{\mathrm{data}}$) and sampling error (mismatch introduced by parallel or approximate decoding). We analyze how generation order and parallelization affect these errors to optimize speed--fidelity trade-offs. This analysis applies formally to autoregressive and AR-compatible masked diffusion models (e.g., LLaDA \citep{nie2025large}) that admit explicit conditional decomposition. For general diffusion or flow-based models without such decomposition, our findings serve as qualitative guidance.


\section{When Do Order and Parallelization Matter?}
\label{sec:why_matter}

\begin{figure*}[t]
    \centering
    \includegraphics[width=0.95\linewidth]{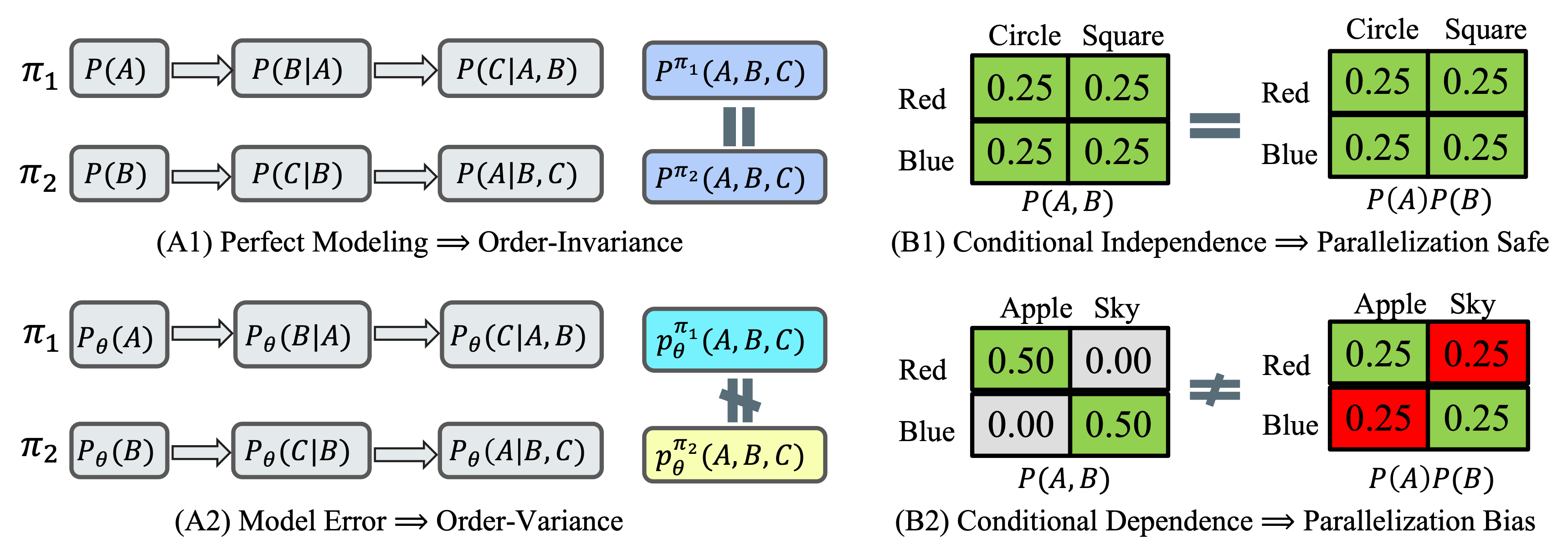}
    \caption{
Order sensitivity and parallelization bias under imperfect factorization.
Minimal examples illustrating that different generation orders ($\pi_1,\pi_2$) or parallel factorizations
induce identical joint distributions under ideal conditions, but diverge under model error or conditional dependence.
}

    \label{fig:gap}
\end{figure*}
This section formalizes how generation schemes influence the target distribution. We identify two idealized regimes where generation order and parallelization remain mathematically invariant. The divergence between these ideals and realistic distributions motivates our subsequent analysis of order-sensitive and parallelization-sensitive strategies.

\cref{fig:gap} provides a minimal illustration of this distinction.
Under idealized assumptions, different generation orders or factorizations induce identical joint
distributions.
In contrast, under model error or conditional dependence, discrepancies compound along the generation
path, leading to order- and factorization-dependent outcomes.

\subsection{Idealized Regime I: Perfect Modeling $\Rightarrow$ Order-Invariance}
\label{sec:ideal_no_model_error}

Consider an oracle setting where the true data conditionals of $p_{\mathrm{data}}$ are accessible and generation is strictly sequential.
For any permutation $\pi$ of $\{1,\ldots,T\}$, the chain rule yields the equivalent factorization
\begin{equation}
p_{\mathrm{data}}(x_{1:T})
=
\prod_{t=1}^T p_{\mathrm{data}}\!\left(x_{\pi(t)} \mid x_{\pi(<t)}\right),
\label{eq:why_chain_rule}
\end{equation}
where $x_{\pi(<t)} := (x_{\pi(1)},\ldots,x_{\pi(t-1)})$ denotes the realized context generated so far (and $X_{\pi(<t)}$ its random counterpart).

In this regime, sampling sequentially according to the exact conditional
$x_{\pi(t)} \sim p_{\mathrm{data}}(\cdot \mid x_{\pi(<t)})$
produces a valid draw from the same joint distribution $p_{\mathrm{data}}(x_{1:T})$ for \emph{any} choice of $\pi$.
Consequently, the induced output distribution is order-invariant: left-to-right and right-to-left decoding are distributionally equivalent under perfect modeling and strictly sequential sampling.

\begin{summarybox}
\textbf{Implication.} If $p_\theta = p_{\mathrm{data}}$ and decoding is strictly sequential (one variable at a time using exact conditionals), then the induced joint distribution over $X_{1:T}$ is invariant to the generation order $\pi$.
\end{summarybox}

\subsection{Realistic Regime I: Model Error and the Uncertainty Proxy}
\label{sec:real_model_error_order}

In practice, we rely on an approximated model $p_\theta(\cdot \mid \cdot)$.
Even with sequential decoding, prediction errors accumulate along the generated path.
Crucially, the \emph{divergence accumulated along decoding trajectories} is \textbf{path-dependent},
because local errors are evaluated under prefix distributions induced by the model itself. Consequently, unlike the idealized regime where all generation paths are equivalent,
in the realistic regime, different generation orders $\pi$ accumulate total error differently.
This motivates the need to identify generation orders that minimize cumulative divergence,
as we analyze formally in \cref{sec:info_order}.


\subsection{Idealized Regime II: Conditional Independence $\Rightarrow$ Exact Parallelization}
\label{sec:ideal_cond_indep}


To isolate the effect of decoding from model approximation, we assume oracle access to a target joint
distribution $p(x_{1:T})$. 
Our focus is not density estimation, but the \emph{distributional bias induced by parallel decoding}.

Let $\mathcal{B}=\{B_m\}_{m=1}^M$ be a partition of $\{1,\ldots,T\}$ into decoding blocks, and define
\begin{equation}
P_m := \bigcup_{\ell < m} B_\ell
\end{equation}
as the set of variables generated prior to block $B_m$.
A blockwise decoding algorithm induces the joint distribution
\begin{equation}
p_{\mathcal{B}}(x_{1:T})
=
\prod_{m=1}^M p_{\prod}(x_{B_m} \mid x_{P_m}),
\label{eq:block_distribution}
\end{equation}
where $p_{\prod}(\cdot \mid x_{P_m})$ is the proposal used to generate block $B_m$.

Parallel decoding typically employs a mean-field approximation within each block,
\begin{equation}
p_{\prod}(x_{B_m} \mid x_{P_m})
=
\prod_{i \in B_m} p(x_i \mid x_{P_m}),
\label{eq:block_factorized}
\end{equation}
which preserves correct marginal conditionals but discards intra-block dependencies present in
$p(x_{B_m}\mid x_{P_m})$.


Under this condition, the conditional joint distribution over the block factorizes completely.
Consequently, sampling all $x_i\in B$ in parallel from their conditionals is equivalent to any sequential sampling order within the block.
In this idealized regime, parallel decoding induces the same joint distribution over $X_{1:T}$ as strictly sequential generation.

\begin{summarybox}
\textbf{Implication.}
If block variables are conditionally independent given the context, parallel generation induces no sampling bias and is distributionally equivalent to sequential decoding.
\end{summarybox}

\subsection{Realistic Regime II: Conditional Dependence $\Rightarrow$ Factorization Bias}
\label{sec:real_cond_dep_bias}

Real-world structured sequences—such as natural language, code, or symbolic reasoning traces—rarely satisfy the conditional independence assumption in \cref{eq:block_factorized} for nontrivial blocks.
Instead, strong intra-block dependencies are the norm, arising from syntactic agreement, semantic coherence, or logical constraints.

Unlike the sequential case—where discrepancies arise from \emph{model error} in approximating $p_{\mathrm{data}}$—parallel decoding introduces a distinct source of error: \textbf{sampling bias} due to mismatched factorization.
Crucially, this bias persists even when all marginal conditional distributions are exact,
as long as intra-block dependencies are ignored at sampling time. As a result, the distribution induced by parallel decoding deviates from the target distribution, motivating a formal analysis of sampling error in \cref{sec:sampling_error}.


\section{Model Error and Order-Sensitive Generation}
\label{sec:info_order}


\subsection{Sequential Generation Under Model Error}
\label{sec:order_model_error}

Consider a learned model
\(
p_\theta^{\pi}(x_{1:T})
=
\prod_{t=1}^T
p_\theta(x_{\pi(t)}\mid x_{\pi(<t)})
\)
approximating the data distribution $p_{\mathrm{data}}$.
During decoding, the model conditions on its own previously generated outputs.
As a result, conditional errors are evaluated under the evolving prefix distribution
$p_\theta(x_{\pi(<t)})$ rather than under $p_{\mathrm{data}}$.

For a prefix $h=x_{\pi(<t)}$, define the local conditional error
\begin{align}
\epsilon_\theta(h)
:=
\mathrm{KL}\!\left(
p_\theta(\cdot\mid h)\,\big\|\,p_{\mathrm{data}}(\cdot\mid h)
\right).
\label{eq:local_error_def}
\end{align}

\begin{lemma}[Rollout-weighted KL decomposition]
\label{prop:rollout_kl}
For any permutation $\pi$,
\begin{align}
\mathrm{KL}\!\left(
p_\theta^{\pi}\,\big\|\,p_{\mathrm{data}}
\right)
=
\sum_{t=1}^T
\mathbb{E}_{x_{\pi(<t)}\sim p_\theta^{\pi}}
\!\left[
\epsilon_\theta\big( x_{\pi(<t)} \big)
\right].
\label{eq:order_chain_kl}
\end{align}
\end{lemma}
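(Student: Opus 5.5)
The plan is to apply the chain rule for KL divergence to the two sequential factorizations along the same permutation $\pi$. By \cref{eq:why_chain_rule}, $p_{\mathrm{data}}$ admits the factorization $\prod_t p_{\mathrm{data}}(x_{\pi(t)}\mid x_{\pi(<t)})$ for any $\pi$, and $p_\theta^{\pi}$ is defined as $\prod_t p_\theta(x_{\pi(t)}\mid x_{\pi(<t)})$. Writing both joints in the same ordered form, the log-ratio telescopes:
\begin{equation*}
\log\frac{p_\theta^{\pi}(x_{1:T})}{p_{\mathrm{data}}(x_{1:T})}
=\sum_{t=1}^T \log\frac{p_\theta(x_{\pi(t)}\mid x_{\pi(<t)})}{p_{\mathrm{data}}(x_{\pi(t)}\mid x_{\pi(<t)})}.
\end{equation*}

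Next, take the expectation under $x_{1:T}\sim p_\theta^{\pi}$ and use linearity to split the sum into $T$ terms. For the $t$-th term, the integrand depends only on $(x_{\pi(<t)},x_{\pi(t)})$, so the later coordinates marginalize out. By the sequential structure of $p_\theta^{\pi}$, the joint law of $(x_{\pi(<t)},x_{\pi(t)})$ under $p_\theta^{\pi}$ is the rollout prefix marginal $p_\theta^{\pi}(x_{\pi(<t)})$ times $p_\theta(x_{\pi(t)}\mid x_{\pi(<t)})$; this is just the tower property. The inner expectation over $x_{\pi(t)}$ given the prefix $h=x_{\pi(<t)}$ is then exactly $\mathrm{KL}(p_\theta(\cdot\mid h)\,\|\,p_{\mathrm{data}}(\cdot\mid h))=\epsilon_\theta(h)$ as defined in \cref{eq:local_error_def}. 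Summing over $t$ gives \cref{eq:order_chain_kl}.

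The only delicate point is well-definedness. First, the conditionals $p_{\mathrm{data}}(\cdot\mid h)$ must make sense on prefixes reached by the model rollout. Second, the terms must not mix $+\infty$ with $-\infty$. I would handle this by noting that if $p_\theta^{\pi}\not\ll p_{\mathrm{data}}$, then some prefix $h$ with positive rollout probability has $\epsilon_\theta(h)=\infty$, so both sides equal $+\infty$. Otherwise, all relevant prefixes have positive $p_{\mathrm{data}}$ mass. In that case the conditionals are defined, and since the alphabet $\mathcal V$ and horizon $T$ are finite, every sum is finite and the interchange of sum and expectation is trivial.

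It is worth emphasizing that the expectation is taken under $p_\theta^{\pi}$ rather than $p_{\mathrm{data}}$. This is precisely what makes the right-hand side order-dependent, even though the left-hand side's target factorizes identically for every $\pi$.
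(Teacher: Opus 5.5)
Your proposal is correct and follows the paper's proof essentially step for step. Like the paper, you factor both joints along $\pi$, split the log-ratio into per-step conditional log-ratios, and use the tower property under the rollout $p_\theta^{\pi}$ to identify each term as $\mathbb{E}_{x_{\pi(<t)}\sim p_\theta^{\pi}}[\epsilon_\theta(x_{\pi(<t)})]$; your added well-definedness argument (both sides are $+\infty$ when $p_\theta^{\pi}\not\ll p_{\mathrm{data}}$, and all terms are finite otherwise) is a sound refinement that the paper omits.
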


Lemma~\ref{prop:rollout_kl} makes the order dependence explicit:
although the algebraic chain rule is permutation invariant, the expectation in
\cref{eq:order_chain_kl} is taken under the order-dependent rollout
$p_\theta^{\pi}(x_{\pi(<t)})$.
Errors made early in the sequence therefore shift the prefix distribution and
affect all subsequent conditional divergences.


\subsection{Objective and Error Amplification}
\label{sec:theoretical_derivation}

Our objective is to select a generation order that minimizes the divergence induced by autoregressive rollout under an imperfect model:
\begin{align}
\pi^\star
&\in
\operatornamewithlimits{arg\,min}_{\pi}
\;
\mathrm{KL}\!\left(
p_\theta^{\pi}\,\big\|\,p_{\mathrm{data}}
\right),
\label{eq:obj_a}
\\
&=
\operatornamewithlimits{arg\,min}_{\pi}
\;
\sum_{t=1}^T
\mathbb{E}_{x_{\pi(<t)}\sim p_\theta^{\pi}}
\!\left[
\epsilon_\theta\!\big(x_{\pi(<t)}\big)
\right],
\label{eq:obj_b}
\end{align}
where the equality follows from Lemma~\ref{prop:rollout_kl}.

Direct optimization of~\cref{eq:obj_b} is intractable due to the combinatorial space of permutations and the dependence of each expectation on the order-induced rollout distribution. To enable principled comparison between generation orders, we introduce the following assumption, motivated by empirical observations.

\begin{assumption}[Entropy-dominated local error]
\label{assump:entropy_error}
For prefixes encountered during autoregressive rollout under order $\pi$,
the expected local approximation error at step $t$ satisfies
\begin{equation}
\label{eq:recursive_error}
\bar{\epsilon}_{\pi(t)}
\;\le\;
\alpha\, H\!\big(X_{\pi(t)} \mid x_{\pi(<t)}\big)
\;+\;
\lambda\, \bar{\epsilon}_{\pi(<t)}
\;+\;
b,
\end{equation}
where
$\bar{\epsilon}_{\pi(t)} := \mathbb{E}_{x_{\pi(<t)}\sim p_\theta^\pi}[\epsilon_\theta(x_{\pi(<t)})]$,
$\alpha>0$ captures the contribution of intrinsic uncertainty,
$\lambda \ge 0$ models prefix-induced error propagation,
and $b\ge 0$ absorbs residual mismatch.
\end{assumption}

This assumption abstracts the empirically observed relationship between conditional entropy and approximation error, while allowing for mild prefix-dependent effects. In \cref{sec:exp_assumption}, we provide empirical evidence that conditional entropy is a strong predictor of approximation error, and that prefix error exhibits a monotone but secondary influence.

Substituting~\cref{eq:recursive_error} into~\cref{eq:obj_b} yields an upper bound of the form
\begin{equation}
\label{eq:weighted_objective}
\mathcal{J}(\pi)
\;\le\;
\sum_{t=1}^T
W_t\,
H\!\big(X_{\pi(t)} \mid x_{\pi(<t)}\big)
\;+\;
\tilde{b},
\end{equation}
where $W_t = \alpha(1+\lambda)^{T-t}$ and $\tilde{b}$ is a constant independent of $\pi$.

\subsection{Easy-First as a Reliable Approximation}
\label{sec:greedy_approximation}

\textbf{Effect of model error.}
The bound in~\cref{eq:weighted_objective} makes the role of model error explicit.
As prefix sensitivity becomes stronger, the induced weight sequence places
increasing emphasis on uncertainty resolved at earlier steps.
Under such conditions, deviations from entropy-minimizing orders incur larger penalties,
making Easy-First a more \emph{risk-averse} approximation that limits worst-case error amplification
under prefix shift. At generation step $t$, the greedy (Easy-first) policy select the next variable with minimal immediate
conditional entropy among the remaining variables:

\begin{equation}
    \pi(t) = \arg \min_{j \notin \pi (< t)} H (X_j \mid x_{\pi (<t)}).
\end{equation}

In practice, exact conditional entropies can be approximated using efficient uncertainty surrogates,
such as model confidence~\citep{nie2025large} or probability margins~\citep{kim2025train}.
The analysis relies only on the assumption that higher intrinsic uncertainty correlates
with larger local approximation error.

\begin{summarybox}
\textbf{Implication.}
Easy-First ordering yields greater benefits as model error increases, due to the amplified effect of early conditional inaccuracies under forward KL.
\end{summarybox}


\section{Sampling Error Induced by Parallel Generation}
\label{sec:sampling_error}

\subsection{Three Metrics for Sampling Error}
\label{sec:sampling_metrics}

Recent theoretical work on masked diffusion inference schedules primarily measures degradation
via expected divergence between the true and sampled distributions (e.g., forward-KL-type criteria)
\citep{chen2025optimal}, whereas ParallelBench demonstrates that parallel decoding can fail
dramatically on dependency-sensitive tasks that standard benchmarks miss \citep{kang2025parallelbench}.
Motivated by this gap, we distinguish forward KL, reverse KL, and incoherence as complementary
metrics capturing information loss, sampling risk, and catastrophic failure frequency.

\textbf{Forward KL (information loss).}
A common choice in prior work is the forward KL divergence
$\mathrm{KL}(p \,\|\, p_{\mathcal{B}})$, which measures how well the decoding distribution
$p_{\mathcal{B}}$ approximates the target distribution $p$ in expectation under $p$.
This metric is well suited for characterizing information loss and average dependence
distortion, and is closely related to notions such as (conditional) total correlation.
However, because expectations are taken under $p$, forward KL is largely insensitive to
configurations that are rare or absent under $p$ but assigned non-negligible probability
by $p_{\mathcal{B}}$.

\textbf{Reverse KL (sampling risk).}
In contrast, reverse KL evaluates the quality of samples actually produced by the decoding
procedure. Samples $x\sim p_{\mathcal{B}}$ are scored under the target distribution $p$ via
the expected negative log-likelihood
\begin{equation}
\mathcal{R}(p_{\mathcal{B}};p)
:=
\mathbb{E}_{x \sim p_{\mathcal{B}}}\big[-\log p(x)\big],
\end{equation}
which differs from $\mathrm{KL}(p_{\mathcal{B}}\|p)$ only by the entropy of $p_{\mathcal{B}}$.
Because expectations are taken under the sampling distribution, reverse KL directly penalizes
decoding-time failures, including configurations that violate semantic or structural constraints.
As a result, reverse KL is explicitly sensitive to \emph{support mismatch} and provides an
operational notion of sampling error.

\textbf{Incoherence (event-level failure).}
Beyond distributional divergences, one may also consider the probability that a decoded sample
is incoherent or implausible under the target distribution, e.g.,
\begin{equation}
\Pr_{x\sim p_{\mathcal{B}}}\!\big[p(x)\le \epsilon\big],
\end{equation}
for a small threshold $\epsilon$.
This metric captures the frequency of catastrophic decoding failures and is particularly
aligned with practical notions of hallucination.
However, incoherence is threshold-dependent and does not capture the full distributional
severity of errors; it is best viewed as a diagnostic rather than a complete metric.

\textbf{Relationship and choice.}
These three metrics emphasize different aspects of parallel decoding error.
Forward KL reflects average information loss under the target distribution,
incoherence measures the frequency of extreme failures, and reverse KL provides a unified
distributional risk that penalizes both the frequency and severity of implausible samples.

\subsection{Forward vs.\ Reverse KL through Conditional Total Correlation}
\label{sec:tc_directions}

\begin{figure*}[t]
    \centering
    \includegraphics[width=1\linewidth]{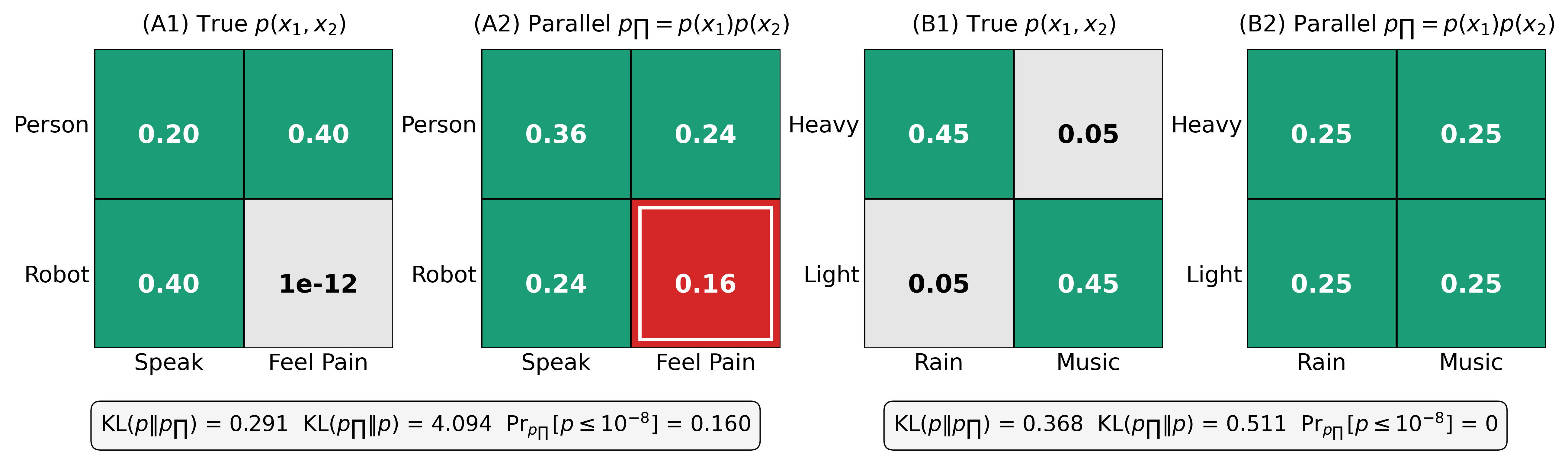}
\caption{
Forward vs.\ reverse KL under parallel factorization.
(A1–A2) A distribution with a near-zero-support configuration.
Although the factorized proposal $p_{\prod}(x_1,x_2)=p(x_1)p(x_2)$ preserves marginals, it assigns substantial
probability mass to an implausible region (highlighted in red), resulting in small forward KL
but large reverse KL and a non-negligible probability of incoherent samples.
(B1–B2) A correlated but fully supported distribution.
Here, factorization increases forward KL but does not introduce support violations, leading to
small reverse KL and coherent parallel samples.
All KL values are reported in nats; zero-probability entries are replaced by a small constant for
numerical stability.
}
    \label{fig:soft_support}
\end{figure*}

Both forward and reverse KL divergences admit natural decompositions in terms of
blockwise conditional dependencies.
These decompositions clarify the distinct aspects of parallel decoding error
captured by each divergence.

\textbf{Forward KL and conditional total correlation.}
We first define the conditional total correlation within block $B_m$ given
$x_{P_m}$ as
\begin{equation}
\overrightarrow{\mathcal{TC}}_m(x_{P_m})
=
\mathrm{KL}\!\left(
p(x_{B_m}\mid x_{P_m})
\,\middle\|\,
p_{\prod}(x_{B_m}\mid x_{P_m})
\right),
\end{equation}

where $p_{\prod}(x_{B_m}\mid x_{P_m})=\prod_{i\in B_m} p(x_i\mid x_{P_m})$ as in \cref{eq:block_factorized}. Applying the chain rule to the forward KL divergence yields
\begin{equation}
\mathrm{KL}(p \,\|\, p_{\mathcal{B}})
=
\sum_{m=1}^M
\mathbb{E}_{x_{P_m} \sim p}
\Big[
\overrightarrow{\mathcal{TC}}_m(x_{P_m})
\Big].
\label{eq:fwd_kl_error}
\end{equation}

This quantity measures the \emph{strength of intra-block dependence} under the true
distribution and characterizes the information loss incurred by ignoring these
dependencies on average under $p$.

\textbf{Reverse KL and reverse conditional total correlation.}
Analogously, define the reverse conditional total correlation as
\begin{equation}
\overleftarrow{\mathcal{TC}}_m(x_{P_m})
=
\mathrm{KL}\!\left(
p_{\prod}(x_{B_m}\mid x_{P_m})
\;\middle\|\;
p(x_{B_m} \mid x_{P_m})
\right).
\end{equation}
The reverse KL divergence then decomposes as
\begin{equation}
\mathrm{KL}(p_{\mathcal{B}} \,\|\, p)
=
\sum_{m=1}^M
\mathbb{E}_{x_{P_m} \sim p_{\mathcal{B}}}
\Big[
\overleftarrow{\mathcal{TC}}_m(x_{P_m})
\Big].
\end{equation}

Unlike its forward counterpart, reverse conditional total correlation evaluates
the plausibility of samples drawn from an independent proposal and is
sensitive to support mismatch.

\textbf{Comparison.}
The distinction between the two directions mirrors the distinction between
information loss and sampling risk.
Conditional total correlation quantifies how much dependence is ignored on average
under the target distribution, while reverse conditional total correlation quantifies
how often and how severely independent sampling produces implausible configurations.
As illustrated in \cref{fig:soft_support}, distributions with strong but fully supported
dependencies may exhibit large conditional total correlation but small reverse
conditional total correlation, whereas near-zero-support configurations can induce
very large reverse conditional total correlation despite small forward KL.

\subsection{Verification and the Cost of Eliminating Sampling Error}
\label{sec:parallel_verification}

The preceding analysis characterizes the source of sampling error in factorized parallel decoding.
In principle, this error can be completely eliminated by exact verification, which produces samples
from the target distribution $p$ and therefore yields zero forward and reverse KL divergence.
However, the computational cost of achieving this correction is governed by a different quantity:
the forward KL divergence between the true joint conditional distribution and the proposal used for
parallel decoding.

\begin{proposition}[Verification Requires Exponential Cost]
\label{prop:verification_cost}
Let $p(x_{B_m}\mid x_{P_m})$ denote the true joint conditional distribution of a block, and let
$p_{\prod}(x_{B_m}\mid x_{P_m})$ be the (factorized) blockwise proposal used by parallel decoding
(e.g., $p_{\prod}(x_{B_m}\mid x_{P_m})=\prod_{i\in B_m} p(x_i\mid x_{P_m})$ as in \cref{eq:block_factorized}).
Any verification or accept--reject procedure that produces an exact sample from
$p(x_{B_m}\mid x_{P_m})$ using proposals drawn from $p_{\prod}(x_{B_m}\mid x_{P_m})$ requires, in expectation,
at least
\begin{equation}
\mathbb{E}[\textnormal{\# proposals}]
\;\ge\;
\exp\!\left(\overrightarrow{\mathcal{TC}}_m(x_{P_m})\right)
\end{equation}
proposals.
\end{proposition}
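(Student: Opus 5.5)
We fix the context $x_{P_m}$ and abbreviate $p:=p(\cdot\mid x_{P_m})$ and $q:=p_{\prod}(\cdot\mid x_{P_m})$ on the finite space $\mathcal{V}^{|B_m|}$. The argument has two parts. First, any exact accept--reject scheme needs at least $M:=\sup_x p(x)/q(x)$ proposals in expectation. Second, $\log M\ge \mathrm{KL}(p\|q)=\overrightarrow{\mathcal{TC}}_m(x_{P_m})$.

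\textbf{Step 1 (lower bound via acceptance probability).} We model a generic verification procedure as follows. It draws i.i.d.\ proposals $y_1,y_2,\dots\sim q$. At step $k$ it accepts $y_k$ with some probability $a_k(y_k)\in[0,1]$; we allow $a_k$ to depend on internal randomness and on past rejected proposals. It outputs the accepted proposal.

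Consider first the stationary case $a_k\equiv a$. The output law is proportional to $q(x)a(x)$. Exactness forces $q(x)a(x)=c\,p(x)$, where $c$ is the per-round acceptance probability. The constraint $a\le 1$ gives $c\le \min_x q(x)/p(x)=1/M$. The number of proposals is geometric, so its mean is $1/c\ge M$.

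For adaptive procedures, let $N$ be the number of proposals and compute
\[
p(x)=\Pr[\text{output}=x]=\sum_k \Pr[N\ge k]\,\mathbb{E}\big[q(x)a_k(x)\,\big|\,N\ge k\big]\le q(x)\,\mathbb{E}[N].
\]
This uses $a_k\le1$ and $\sum_k\Pr[N\ge k]=\mathbb{E}[N]$. Taking the worst $x$ gives $\mathbb{E}[N]\ge M$.

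This inequality is the main point to get right. It relies on the fact that an output equal to $x$ must be one of the proposals, and each proposal equals $x$ with probability $q(x)$, independently of the past. Making this precise is the step I expect to be the main obstacle. The sum-over-rounds identity should handle it.

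\textbf{Step 2 (from sup-ratio to KL).} Since
\[
\mathrm{KL}(p\|q)=\mathbb{E}_{p}\big[\log(p/q)\big]\le \log\sup_x p(x)/q(x)=\log M,
\]
we get $M\ge \exp(\overrightarrow{\mathcal{TC}}_m(x_{P_m}))$. Combining this with Step~1 gives the claim.

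If $p\not\ll q$, then $M=\infty$ and both sides are infinite. Exact sampling is then impossible, which is consistent with the bound.

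I would also remark that the same bound applies per block for any context. Averaging over $x_{P_m}$ and applying Jensen's inequality gives $\mathbb{E}[\#]\ge\exp(\mathbb{E}_p[\overrightarrow{\mathcal{TC}}_m])$, which ties the cost to the forward KL in \cref{eq:fwd_kl_error}.
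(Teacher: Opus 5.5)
Your proof is correct and follows the paper's argument: exactness together with $a\le 1$ forces the per-round acceptance rate to satisfy $Z\le 1/M$ with $M=\sup_x p(x)/q(x)$, so the geometric number of proposals has mean at least $M$, and then $\log M\ge \mathrm{KL}(p\,\|\,q)=\overrightarrow{\mathcal{TC}}_m(x_{P_m})$ follows by averaging the pointwise log-ratio under $p$. Your extra sum-over-rounds bound $p(x)\le q(x)\,\mathbb{E}[N]$ for history-dependent acceptance rules is also valid, since it only needs $y_k$ to be independent of the event $\{N\ge k\}$; it goes beyond the paper's proof, which treats only a fixed acceptance function $a(\cdot)$ and so does not by itself cover the ``any procedure'' wording of the statement.
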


\textbf{Interpretation.} This lower bound can be derived via standard rejection-sampling arguments,
with an information-theoretic interpretation.
We include a complete proof of Poposition~\ref{prop:verification_cost} in
\cref{app:sec5_verification_cost}. When samples are drawn from
a proposal distribution $p_{\prod}(x_{B_m}\mid x_{P_m})$ and accepted to exactly reproduce a target
distribution $p(x_{B_m}\mid x_{P_m})$, the expected number of proposals required grows exponentially
in the forward TC $\overrightarrow{\mathcal{TC}}_m(x_{P_m})$, which measures how difficult it is to
correct the mismatch introduced by factorizing the joint conditional
distribution.

\begin{summarybox}
\textbf{Implication.}
Verification can eliminate sampling error only by paying an exponential cost governed by the conditional TC. Practical parallel decoding methods avoid this cost and therefore necessarily retain structural
sampling error.
\end{summarybox}




\subsection{Mitigating Sampling Error via Remasking}
\label{sec:mitigate_sampling}

While exact verification is required to \emph{eliminate} sampling error, many masked
diffusion models employ iterative remasking as a practical heuristic to reduce
incoherence.
By repeatedly resampling subsets of variables, remasking can suppress locally
inconsistent assignments and thereby reduce the \emph{frequency} of visibly
implausible samples.

However, iterative remasking does not fundamentally alter the class of proposal
distributions used during decoding.
As long as each remasking step relies on factorized or approximate conditionals that
do not explicitly enforce joint support constraints, the resulting transition kernel
may assign nonzero probability to configurations that are outside the support of the
true conditional distribution.
In this case, the reverse conditional divergence remains strictly positive and may be
infinite.

Appendix~\ref{app:remasking} formalizes this limitation by showing that, under a mild
residual support violation assumption, no finite number of remasking steps can
guarantee vanishing reverse KL divergence.
Accordingly, remasking should be viewed as a heuristic for improving empirical sample
quality rather than a mechanism for ensuring distributional correctness.

\begin{summarybox}
\textbf{Implication.}
Iterative remasking can reduce incoherence in practice, but without explicit support
containment or verification, it provides no guarantee of eliminating sampling error.
\end{summarybox}

\section{Experiments}
\label{sec:experiments}

\subsection{Sampling Error under Parallel Decoding in a Block-HMM}
\label{sec:hmm_experiment}

We study sampling error induced by parallel factorization in a fully controlled generative model,
where the target distribution is known exactly and model error is absent.

\textbf{Block-HMM with parity emissions.}
We consider a binary sequence $X_{1:T}\in\{0,1\}^T$ partitioned into blocks
$X^{(n)}\in\{0,1\}^B$.
At the block level, generation is governed by a $K$-state hidden Markov model
with latent states $Z_n$:
\begin{equation}
Z_1 \sim \pi, \qquad
Z_n \mid Z_{n-1} \sim A .
\end{equation}
Each block consists of one parity bit and $B-1$ content bits,
$X^{(n)}=(P_n,Y_{n,1},\dots,Y_{n,B-1})$.
Conditioned on $Z_n=z$, the content bits are independent Bernoulli variables,
\begin{equation}
Y_{n,i} \mid Z_n=z \sim \mathrm{Bern}(\rho_z),
\end{equation}
where $\{\rho_z\}_{z=1}^K$ are fixed, state-specific parameters shared across all blocks.
The parity bit then satisfies an XOR constraint with noise
\begin{equation}
P_n = \bigoplus_{i=1}^{B-1} Y_{n,i},
\qquad
\Pr(P_n \neq \oplus_i Y_{n,i})=\eta .
\end{equation}
As $\eta\to 0$, the model induces strong intra-block dependence and near-zero-support configurations.

Detailed parameter configuration is shown in \cref{app:block-hmm}.

\textbf{Parallel decoding.}
We compare two decoding procedures.
Mean-field parallel decoding samples each block independently from exact marginal
conditionals,
\begin{equation}
p_{\prod}(x^{(n)}\mid x_{<n})
=
\prod_{i\in B} p(x_i\mid x_{<n}),
\end{equation}
which preserves all marginals but discards intra-block dependencies.
As a control, verified decoding samples blocks exactly from
$p(x^{(n)}\mid x_{<n})$ by enumeration over all $2^B$ block configurations, producing samples from the true joint distribution.

\textbf{Evaluation metrics.}
We evaluate decoding quality using three complementary metrics.
Sampling risk is measured by the reverse KL divergence $\mathrm{KL}(p_{\prod}\|p)$, forward KL divergence $\mathrm{KL}(p\|p_{\prod})$ and the incoherence rate, where incoherence rate is reported as the fraction of blocks whose true conditional probability falls below the threshold, averaged over blocks and Monte Carlo samples, i.e., 

\begin{equation}
\Pr_{x\sim q}\!\left[p(x^{(n)}\mid x_{<n}) \le \tau \right],
\end{equation}


In the implementation we set $\tau=10^{-8}$.

\begin{figure*}[t]
    \centering
    \vspace*{-.1cm}
    \includegraphics[width=1\linewidth]{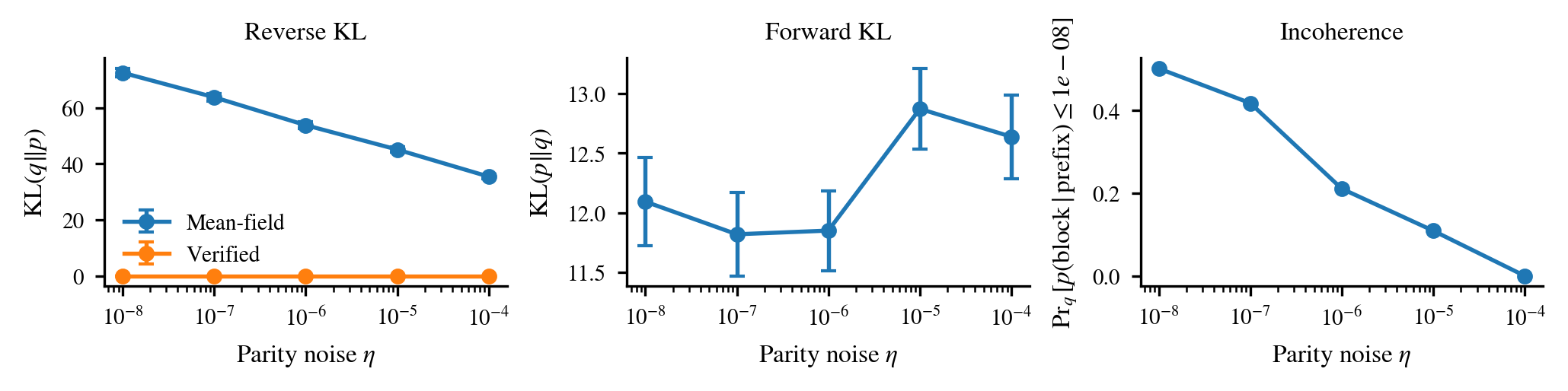}
    \caption{
Forward KL, reverse KL, and incoherence under parallel mean-field decoding
in a Block-HMM with parity emissions ($B=8$).
Mean-field decoding incurs severe sampling error and incoherence at low parity noise
despite modest forward KL, while verified decoding achieves zero sampling error.
}
    \label{fig:hmm_kl}
\end{figure*}

\begin{figure}[t]
    \centering
    \vspace*{-.1cm}
    \includegraphics[width=1.0\linewidth]{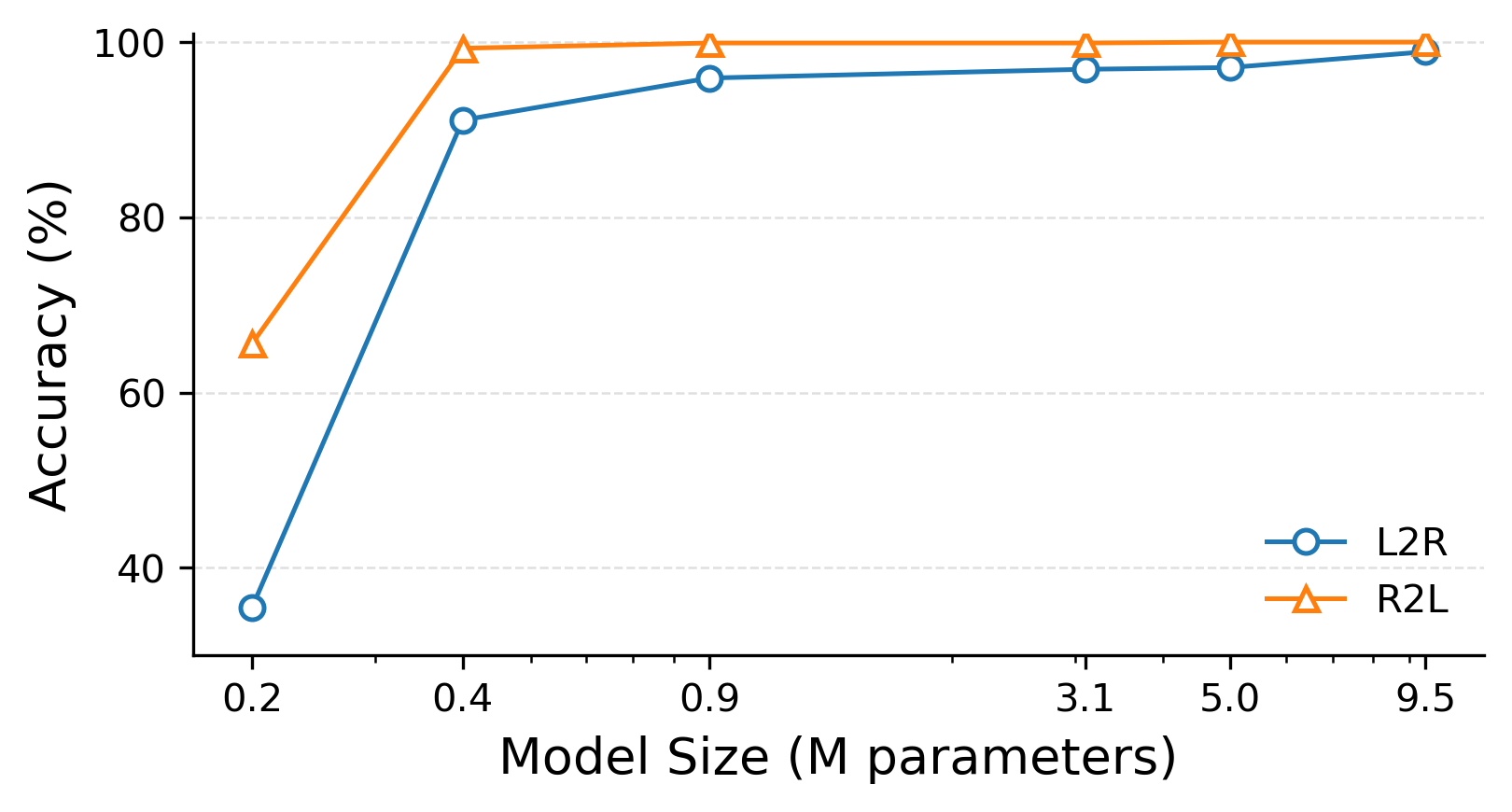}
    \caption{Comparison of L2R and R2L decoding strategies under varying model scales.}
    \label{fig:parameter_vs_accuracy}
\end{figure}

\textbf{Results.}
\cref{fig:hmm_kl} reports results for block size $B=8$ across parity noise levels $\eta$.
Mean-field decoding incurs extremely large reverse KL in the low-noise regime,
exceeding $70$ nats when $\eta=10^{-8}$, while verified decoding achieves zero reverse KL.
In contrast, the forward KL remains relatively stable ($\approx 11$--$13$ nats)
across all noise levels, indicating that it is insensitive to near-zero-support
configurations.
The incoherence rate closely tracks reverse KL: when $\eta$ is small,
mean-field decoding produces incoherent blocks with probability close to $50\%$,
corresponding to frequent violations of the parity constraint.

\subsection{Empirical Validation: Arithmetic Reasoning}
\label{sec:exp_arithmetic}

To empirically validate the ``Easy-first" principle, we evaluated autoregressive models on a \textbf{10-digit addition} task (see \cref{app:arithmetic_details} for detailed experimental setup). In this task, the logical flow of information—governed by carry propagation—moves from the least significant digit (LSD) to the most significant digit (MSD). Consequently, a \textbf{Right-to-Left (R2L)} generation sequence aligns with the optimal ``Easy-First'' order by resolving low-entropy, local dependencies before global ones. Conversely, a standard \textbf{Left-to-Right (L2R)} approach forces the model to predict high-entropy tokens (MSDs) without the necessary computational context provided by the lower digits.

The results, illustrated in \cref{fig:parameter_vs_accuracy}, demonstrate a clear relationship between model capacity and order sensitivity:  when a model is near-perfect, generation order is nearly invariant; however, as approximation errors increase, the ``Easy-First'' (R2L) path becomes essential for suppressing rollout-induced covariate shift.

\begin{table}[t]
\centering
\caption{\textbf{Unmasking Strategy Comparison in LLaDA (10-digit Addition).} 
Overall accuracy (\%) with edge case accuracy in parentheses.}
\label{tab:llada_remasking}
\resizebox{\columnwidth}{!}{%
\begin{tabular}{lcccc}
\toprule
\textbf{Strategy} & \textbf{16} & \textbf{32} & \textbf{64} & \textbf{128} \\
\midrule
Left-to-Right & 76.0 (45.0) & 69.0 (50.0) & 79.0 (50.0) & 84.0 (62.5) \\
Random & 87.0 (80.0) & 75.0 (77.5) & 78.0 (75.0) & 79.0 (80.0) \\
Right-to-Left & 93.0 (82.5) & 87.0 (85.0) & 94.0 (85.0) & 92.0 (80.0) \\
\textbf{Low-Confidence} & \textbf{97.0} (95.0) & \textbf{89.0} (80.0) & \textbf{99.0} (97.5) & \textbf{99.0} (97.5) \\
\bottomrule
\end{tabular}%
}
\end{table}

We further test this task in the parallel decoding regime using LLaDA-8B-Instruct \citep{nie2025large} on addition tasks. We compare four unmasking strategies: \textbf{Low-Confidence} (prioritizing high-confidence tokens), \textbf{R2L}, \textbf{L2R}, and \textbf{Random}. Results in \cref{tab:llada_remasking} show that the \textbf{Low-Confidence} strategy achieves the highest accuracy (up to 99\%), particularly excelling in edge cases. While R2L remains a strong heuristic, it is surpassed by the confidence-based approach. This confirms that the ``Easy-First'' principle is fundamentally about resolving \textit{low-entropy variables first}; in parallel decoding, model confidence serves as a superior proxy for easiness compared to fixed spatial orders.

\vspace*{-.2cm}
\section{Related Work}
\label{sec:related}

\textbf{Masked Diffusion and Discrete Flow Matching.}
Diffusion models have recently been adapted from continuous domains \citep{ho2020denoising, song2020score} to discrete sequence generation.
Early work such as D3PM and MaskGIT introduced discrete corruption and iterative refinement \citep{austin2021structured, chang2022maskgit}.
More recent methods scale these ideas to language modeling, including MDLM \citep{sahoo2024simple} and LLaDA \citep{nie2025large}, as well as Discrete Flow Matching, which provides an optimal-transport interpretation of discrete generation paths \citep{campbell2024generative, gat2024discrete}.
These works primarily focus on training objectives and architectures.
In contrast, we study \emph{inference-time dynamics}, showing that the generation order—equivalently, the integration path—fundamentally determines error accumulation.

\textbf{Parallel and Speculative Decoding.}
To mitigate the $O(T)$ decoding latency of autoregressive models, speculative decoding adopts a draft-and-verify paradigm that preserves the target distribution under exact verification \citep{leviathan2023fast, chen2023accelerating}.
Pure parallel decoding methods, including blockwise decoding \citep{stern2018blockwise} and aggressive unmasking in MDMs, relax verification to maximize throughput.

\textbf{Generation Order and Easy-First Strategies.}
The importance of generation order has long been recognized in structured prediction and non-autoregressive translation \citep{gu2017non}.
Recent work exploits model uncertainty to guide generation trajectories, including confidence-based masking \citep{chang2022maskgit, sahoo2024simple} and path- or edit-based planning \citep{peng2025path, havasi2025edit}. While these strategies utilize uncertainty as a heuristic, the field has lacked a unified theoretical framework to separate the distinct failure modes of order sensitivity and parallelization bias.

Furthermore, \citet{chen2025optimal} provide theoretical analysis of block-size schedules under mean-field parallel decoding, where the variables within each block are randomly selected.  Their analysis highlights the role of the conditional total correlation in minimizing the forward KL error (\cref{eq:fwd_kl_error}), averaged over block selections.



\section{Conclusion} \label{sec:conclusion}


We investigated generation order and parallel decoding in masked diffusion models through an information-theoretic lens. We showed that under model error, generation order is intrinsically important, with Easy-First strategies becoming more effective as error increases. We further demonstrated that factorized parallel decoding introduces unavoidable sampling bias, which can lead to severe incoherence that forward KL fails to capture. While exact verification removes this bias at exponential cost, practical heuristics such as remasking cannot guarantee distributional correctness.

Overall, our results clarify the fundamental trade-off between decoding efficiency and fidelity, and provide principled guidance for parallel generation. Future avenues include analyzing the \textbf{coupled dynamics} of order and parallelism, and developing decoding schemes that jointly reason about error propagation and dependency structure.

\clearpage
\section*{Impact Statement}

This paper presents a theoretical analysis of decoding algorithms for generative models.
Our findings clarify how generation order and parallelization affect distributional correctness, with implications for efficient and reliable inference.
As this work focuses on decoding behavior rather than new modeling capabilities, it does not introduce societal risks beyond those already associated with existing generative models.

\nocite{langley00}

\bibliography{example_paper}
\bibliographystyle{icml2026}

\newpage
\appendix
\crefalias{section}{appendix}
\crefalias{subsection}{appendix}
\crefalias{subsubsection}{appendix}

\onecolumn

\section{Proofs for Section~\ref{sec:info_order}}
\label{app:proof_info_order}

\subsection{Proof of the Order-Dependent KL Decomposition}

We prove Eq.~\ref{eq:order_chain_kl}.

Let $\pi$ be an arbitrary permutation of $\{1,\ldots,T\}$ and define the model-induced distribution
\begin{equation}
p_\theta^{\pi}(x_{1:T})
=
\prod_{t=1}^T p_\theta(x_{\pi(t)} \mid x_{\pi(<t)}).
\end{equation}
Similarly, the true data distribution factorizes as
\begin{equation}
p_{\mathrm{data}}(x_{1:T})
=
\prod_{t=1}^T p_{\mathrm{data}}(x_{\pi(t)} \mid x_{\pi(<t)}),
\end{equation}
which holds for any permutation by the chain rule of probability.

By definition, the reverse KL divergence is
\begin{equation}
\mathrm{KL}(p_\theta^{\pi} \,\|\, p_{\mathrm{data}})
=
\mathbb{E}_{x \sim p_\theta^{\pi}}
\left[
\log
\frac{p_\theta^{\pi}(x)}{p_{\mathrm{data}}(x)}
\right].
\end{equation}
Substituting the factorizations yields
\begin{equation}
\log
\frac{p_\theta^{\pi}(x)}{p_{\mathrm{data}}(x)}
=
\sum_{t=1}^T
\log
\frac{
p_\theta(x_{\pi(t)} \mid x_{\pi(<t)})
}{
p_{\mathrm{data}}(x_{\pi(t)} \mid x_{\pi(<t)})
}.
\end{equation}
Taking expectation under $x \sim p_\theta^{\pi}$ and exchanging summation with expectation gives
\begin{equation}
\mathrm{KL}(p_\theta^{\pi} \,\|\, p_{\mathrm{data}})
=
\sum_{t=1}^T
\mathbb{E}_{x \sim p_\theta^{\pi}}
\left[
\log
\frac{
p_\theta(x_{\pi(t)} \mid x_{\pi(<t)})
}{
p_{\mathrm{data}}(x_{\pi(t)} \mid x_{\pi(<t)})
}
\right].
\end{equation}
Applying iterated expectation with respect to the prefix distribution $x_{\pi(<t)} \sim p_\theta^{\pi}$ yields
\begin{equation}
\mathrm{KL}(p_\theta^{\pi} \,\|\, p_{\mathrm{data}})
=
\sum_{t=1}^T
\mathbb{E}_{x_{\pi(<t)} \sim p_\theta^{\pi}}
\left[
\mathrm{KL}\big(
p_\theta(\cdot \mid x_{\pi(<t)})
\,\|\, 
p_{\mathrm{data}}(\cdot \mid x_{\pi(<t)})
\big)
\right].
\end{equation}
Defining
\begin{equation}
\epsilon_\theta(x_{\pi(<t)})
=
\mathrm{KL}\big(
p_\theta(\cdot \mid x_{\pi(<t)})
\,\|\, 
p_{\mathrm{data}}(\cdot \mid x_{\pi(<t)})
\big)
\end{equation}
establishes Eq.~\ref{eq:order_chain_kl}.

\subsection{Derivation of the Weighted Error Objective}

\begin{proof}[Proof of \cref{eq:weighted_objective}]
Define $e_t := \bar{\epsilon}_{\pi(t)}$ and
$h_t := H\!\big(X_{\pi(t)} \mid x_{\pi(<t)}\big)$.
The cumulative objective is
\begin{equation}
\mathcal{J}(\pi) := \sum_{t=1}^T e_t .
\end{equation}

By Assumption~\ref{assump:entropy_error}, for each $t$ we have
\begin{equation}
\label{eq:et_recursion}
e_t \le \alpha h_t + \lambda \sum_{k=1}^{t-1} e_k + b .
\end{equation}

Substituting~\eqref{eq:et_recursion} into the definition of
$\mathcal{J}(\pi)$ yields
\begin{align}
\mathcal{J}(\pi)
&\le
\sum_{t=1}^T \alpha h_t
+ \lambda \sum_{t=1}^T \sum_{k=1}^{t-1} e_k
+ Tb .
\end{align}

Reordering the double sum gives
\begin{equation}
\sum_{t=1}^T \sum_{k=1}^{t-1} e_k
=
\sum_{k=1}^{T-1} (T-k)\, e_k .
\end{equation}

We now unroll the recursion.
From~\eqref{eq:et_recursion}, the entropy term $h_t$
contributes $\alpha h_t$ directly to $e_t$, and propagates
to later errors multiplicatively via the prefix term.
Specifically, its contribution to $e_{t+s}$ is bounded by
$\alpha \lambda^{s} h_t$ for $s \ge 0$.
Summing over all future steps yields the total weight
\begin{equation}
W_t
=
\alpha \sum_{s=0}^{T-t} \lambda^{s}
=
\alpha (1+\lambda)^{T-t} .
\end{equation}

Collecting all entropy terms, we obtain
\begin{equation}
\mathcal{J}(\pi)
\le
\sum_{t=1}^T
W_t \, H\!\big(X_{\pi(t)} \mid x_{\pi(<t)}\big)
+ \tilde{b},
\end{equation}
where $\tilde{b} := Tb$ is independent of the generation order $\pi$.
\end{proof}

\subsection{Consequence for Ordering}

Because $W_t$ decays exponentially in $t$, uncertainty incurred at early steps dominates the total objective.
Therefore, any ordering that delays low-entropy variables to later positions incurs a strictly larger objective
under the same entropy trajectory.
This structural asymmetry motivates greedy, easy-first approximations that minimize immediate conditional entropy.

\section{Empirical Validation of Assumption~\ref{assump:entropy_error}}
\label{sec:exp_assumption}

To validate our core theoretical assumption, we conduct an empirical study measuring the relationship between teacher model entropy and student model error during autoregressive generation.

\textbf{Experimental Setup.}
We use Qwen2.5-7B as the teacher model and Qwen2.5-0.5B as the student model.
Starting from 100 diverse seed prompts, we generate sequences of varying lengths (20, 100, 200 tokens) using greedy decoding from the student model.
At each step $t$, we measure: (1) the teacher's conditional entropy $H_t = H(X_t \mid x_{<t})$, and (2) the KL divergence $\epsilon_t = \mathrm{KL}(p_{\text{teacher}}(\cdot \mid x_{<t}) \| p_{\text{student}}(\cdot \mid x_{<t}))$ between teacher and student distributions.
We then fit the linear model:
\begin{equation}
    \epsilon_t = \alpha \cdot H_t + \lambda \cdot \bar{\epsilon}_{<t} + \eta_t,
\end{equation}
where $\bar{\epsilon}_{<t} = \frac{1}{t}\sum_{k<t} \epsilon_k$ is the average past error.

\textbf{Results.}
\cref{fig:assumption_scatter} shows results for sequences of 200 tokens. The entropy coefficient $\alpha = 0.138$ is highly significant ($p < 0.001$, $R^2 = 0.26$), confirming that higher teacher uncertainty correlates with larger student errors.

\begin{figure}[t]
    \centering
    \includegraphics[width=0.5\columnwidth]{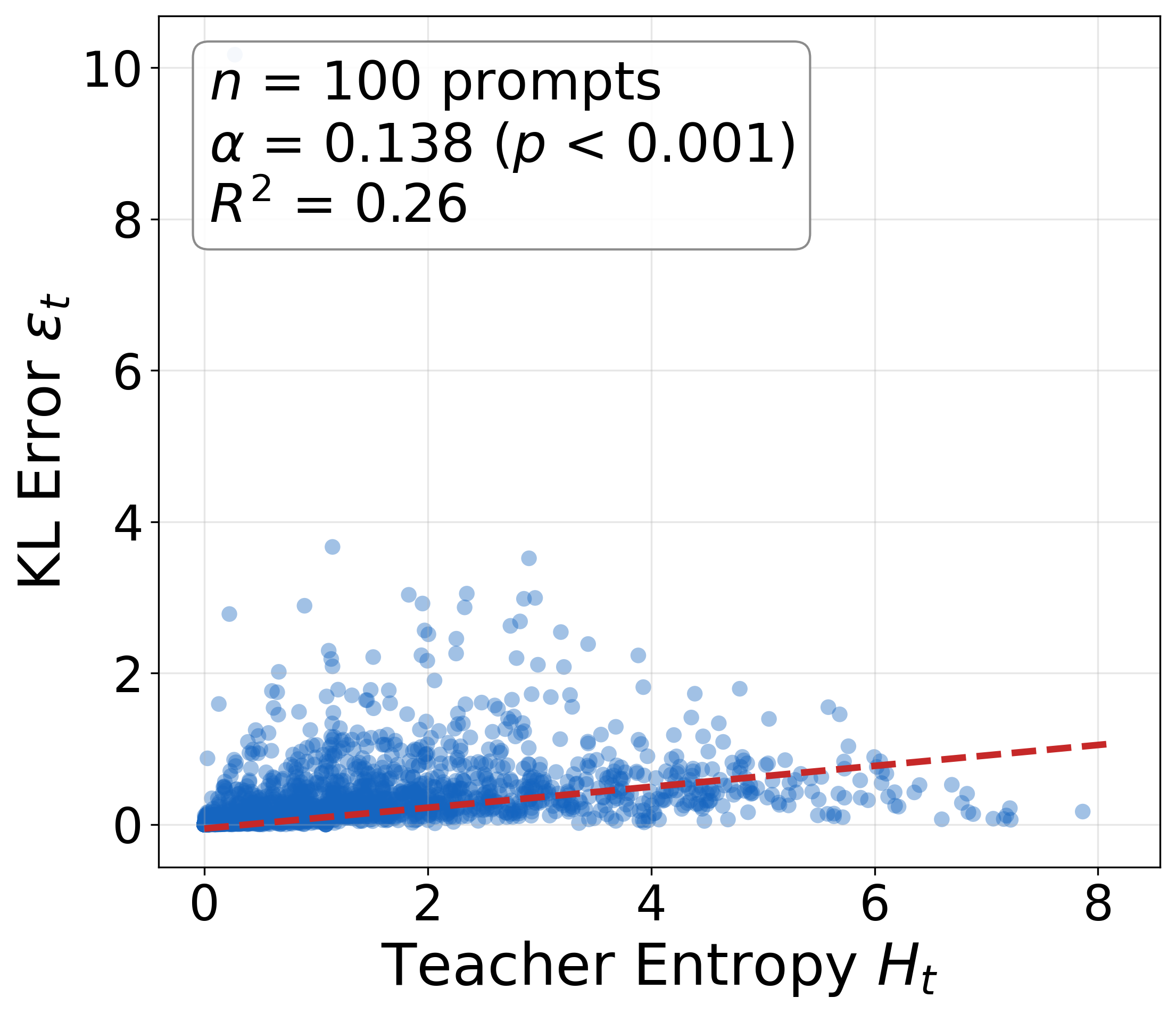}
    \caption{Teacher entropy $H_t$ vs. KL error $\epsilon_t$ (200 tokens). Higher entropy correlates with larger student error ($\alpha = 0.138$, $p < 0.001$, $R^2 = 0.26$).}
    \label{fig:assumption_scatter}
\end{figure}

These results provide strong empirical support for Assumption~\ref{assump:entropy_error}: the local error at each generation step is primarily driven by intrinsic uncertainty (entropy), with error propagation playing a secondary role.


\section{Proofs for Section~\ref{sec:sampling_error}}
\label{app:sec5_proofs}

\subsection{Forward- and Reverse-KL Blockwise Decompositions}
\label{app:sec5_kl_decomp}

We restate the blockwise decoding distribution
\begin{equation}
p_{\mathcal{B}}(x_{1:T})
=
\prod_{m=1}^M p_{\prod}(x_{B_m}\mid x_{P_m}),
\label{eq:app_block_dist}
\end{equation}
and the target joint distribution factorized by the chain rule over the same blocks,
\begin{equation}
p(x_{1:T})
=
\prod_{m=1}^M p(x_{B_m}\mid x_{P_m}).
\label{eq:app_target_block_chain}
\end{equation}

\textbf{Forward KL decomposition.}
Using the definition of KL divergence and \cref{eq:app_block_dist}--\cref{eq:app_target_block_chain},
\begin{equation}
\mathrm{KL}(p \,\|\, p_{\mathcal{B}})
=
\mathbb{E}_{x\sim p}\!\left[
\log \frac{p(x_{1:T})}{p_{\mathcal{B}}(x_{1:T})}
\right]
=
\mathbb{E}_{x\sim p}\!\left[
\sum_{m=1}^M
\log \frac{p(x_{B_m}\mid x_{P_m})}{p_{\prod}(x_{B_m}\mid x_{P_m})}
\right].
\label{eq:app_forward_expand1}
\end{equation}
Swap sum and expectation and condition on $x_{P_m}$ under $p$:
\begin{equation}
\mathrm{KL}(p \,\|\, p_{\mathcal{B}})
=
\sum_{m=1}^M
\mathbb{E}_{x_{P_m}\sim p}\!
\left[
\mathbb{E}_{x_{B_m}\sim p(\cdot\mid x_{P_m})}\!
\left[
\log \frac{p(x_{B_m}\mid x_{P_m})}{p_{\prod}(x_{B_m}\mid x_{P_m})}
\right]
\right]
=
\sum_{m=1}^M
\mathbb{E}_{x_{P_m}\sim p}\!
\left[
\mathrm{KL}\!\left(
p(\cdot\mid x_{P_m}) \,\|\, p_{\prod}(\cdot\mid x_{P_m})
\right)
\right].
\label{eq:app_forward_blockwise}
\end{equation}
If the proposal factorizes within the block as
$p_{\prod}(x_{B_m}\mid x_{P_m})=\prod_{i\in B_m} p(x_i\mid x_{P_m})$, then each inner term becomes
\begin{equation}
\mathrm{KL}\!\left(
p(x_{B_m}\mid x_{P_m})
\,\middle\|\,
\prod_{i\in B_m} p(x_i\mid x_{P_m})
\right),
\label{eq:app_forward_tc}
\end{equation}
i.e., the (forward) conditional total correlation in block $B_m$ given $x_{P_m}$.

\textbf{Reverse KL decomposition.}
Similarly,
\begin{equation}
\mathrm{KL}(p_{\mathcal{B}} \,\|\, p)
=
\mathbb{E}_{x\sim p_{\mathcal{B}}}\!\left[
\log \frac{p_{\mathcal{B}}(x_{1:T})}{p(x_{1:T})}
\right]
=
\mathbb{E}_{x\sim p_{\mathcal{B}}}\!\left[
\sum_{m=1}^M
\log \frac{p_{\prod}(x_{B_m}\mid x_{P_m})}{p(x_{B_m}\mid x_{P_m})}
\right].
\label{eq:app_reverse_expand1}
\end{equation}
Again swapping sum and expectation and conditioning on $x_{P_m}$ under $p_{\mathcal{B}}$ yields
\begin{equation}
\mathrm{KL}(p_{\mathcal{B}} \,\|\, p)
=
\sum_{m=1}^M
\mathbb{E}_{x_{P_m}\sim p_{\mathcal{B}}}\!
\left[
\mathrm{KL}\!\left(
p_{\prod}(\cdot\mid x_{P_m}) \,\|\, p(\cdot\mid x_{P_m})
\right)
\right].
\label{eq:app_reverse_blockwise}
\end{equation}
Under the factorized proposal
$p_{\prod}(x_{B_m}\mid x_{P_m})=\prod_{i\in B_m} p(x_i\mid x_{P_m})$, the inner KL becomes
\begin{equation}
\mathrm{KL}\!\left(
\prod_{i\in B_m} p(x_i\mid x_{P_m})
\;\middle\|\;
p(x_{B_m}\mid x_{P_m})
\right),
\label{eq:app_reverse_tc}
\end{equation}
which is the reverse conditional total correlation used in Section~\ref{sec:tc_directions}.
\qed

\subsection{Proof of Proposition~\ref{prop:verification_cost}}
\label{app:sec5_verification_cost}

Fix a block index $m$ and a realized context $x_{P_m}$.
Let the target conditional be $p(\cdot\mid x_{P_m})$ over $x_{B_m}$ and the proposal be
$p_{\prod}(\cdot\mid x_{P_m})$.
Consider an accept--reject (rejection sampling) procedure that draws i.i.d.\ proposals
$X \sim p_{\prod}(\cdot\mid x_{P_m})$ and accepts each draw with probability $a(X)\in[0,1]$ such that the
distribution of accepted samples is exactly $p(\cdot\mid x_{P_m})$.

For correctness, the accepted-sample density must satisfy, for all $x_{B_m}$,
\begin{equation}
p(x_{B_m}\mid x_{P_m})
=
\frac{p_{\prod}(x_{B_m}\mid x_{P_m})\,a(x_{B_m})}{Z},
\label{eq:app_accept_condition}
\end{equation}
where $Z:=\mathbb{E}_{X\sim p_{\prod}(\cdot\mid x_{P_m})}[a(X)]$ is the overall acceptance probability.
Rearranging \cref{eq:app_accept_condition} gives
\begin{equation}
a(x_{B_m})
=
Z \,\frac{p(x_{B_m}\mid x_{P_m})}{p_{\prod}(x_{B_m}\mid x_{P_m})}.
\label{eq:app_accept_rule}
\end{equation}
Since $a(x_{B_m})\le 1$ for all $x_{B_m}$, we must have
\begin{equation}
Z
\le
\inf_{x_{B_m}}
\frac{p_{\prod}(x_{B_m}\mid x_{P_m})}{p(x_{B_m}\mid x_{P_m})}.
\label{eq:app_Z_upper}
\end{equation}
Equivalently, defining the usual rejection constant
\begin{equation}
M(x_{P_m})
:=
\sup_{x_{B_m}}
\frac{p(x_{B_m}\mid x_{P_m})}{p_{\prod}(x_{B_m}\mid x_{P_m})},
\label{eq:app_M_def}
\end{equation}
we obtain
\begin{equation}
Z \le \frac{1}{M(x_{P_m})}.
\label{eq:app_Z_le_1_over_M}
\end{equation}
Because each proposal is accepted with probability $Z$, the number of proposals $N$ until the first
acceptance is geometric with mean
\begin{equation}
\mathbb{E}[N \mid x_{P_m}] = \frac{1}{Z} \ge M(x_{P_m}).
\label{eq:app_geom_mean}
\end{equation}

It remains to lower bound $M(x_{P_m})$ by an exponential in the forward KL.
From \cref{eq:app_M_def}, for every $x_{B_m}$,
\begin{equation}
\log M(x_{P_m})
\ge
\log \frac{p(x_{B_m}\mid x_{P_m})}{p_{\prod}(x_{B_m}\mid x_{P_m})}.
\label{eq:app_logM_pointwise}
\end{equation}
Taking expectation over $x_{B_m}\sim p(\cdot\mid x_{P_m})$ yields
\begin{equation}
\log M(x_{P_m})
\ge
\mathbb{E}_{x_{B_m}\sim p(\cdot\mid x_{P_m})}\!\left[
\log \frac{p(x_{B_m}\mid x_{P_m})}{p_{\prod}(x_{B_m}\mid x_{P_m})}
\right]
=
\mathrm{KL}\!\left(p(\cdot\mid x_{P_m}) \,\|\, p_{\prod}(\cdot\mid x_{P_m})\right).
\label{eq:app_logM_ge_KL}
\end{equation}
Exponentiating \cref{eq:app_logM_ge_KL} gives
\begin{equation}
M(x_{P_m})
\ge
\exp\!\left(
\mathrm{KL}\!\left(p(\cdot\mid x_{P_m}) \,\|\, p_{\prod}(\cdot\mid x_{P_m})\right)
\right).
\label{eq:app_M_ge_expKL}
\end{equation}
Combining \cref{eq:app_geom_mean} and \cref{eq:app_M_ge_expKL}, we conclude
\begin{equation}
\mathbb{E}[N \mid x_{P_m}]
\ge
\exp\!\left(
\mathrm{KL}\!\left(p(\cdot\mid x_{P_m}) \,\|\, p_{\prod}(\cdot\mid x_{P_m})\right)
\right).
\label{eq:app_cost_bound_context}
\end{equation}

Finally, under the factorized parallel proposal in \cref{eq:block_factorized},
\begin{equation}
p_{\prod}(x_{B_m}\mid x_{P_m})
=
\prod_{i\in B_m} p(x_i\mid x_{P_m}),
\label{eq:app_factorized_q}
\end{equation}
the KL in \cref{eq:app_cost_bound_context} is exactly the forward conditional total correlation
$\overrightarrow{\mathcal{TC}}_m(x_{P_m})$ defined in Section~\ref{sec:tc_directions}.
Therefore,
\begin{equation}
\mathbb{E}[\textnormal{\# proposals}\mid x_{P_m}]
\ge
\exp\!\left(\overrightarrow{\mathcal{TC}}_m(x_{P_m})\right),
\label{eq:app_final_cost}
\end{equation}
which is Proposition~\ref{prop:verification_cost}.


\subsection{Support Mismatch Implies Unbounded Reverse KL}
\label{app:sec5_support_mismatch}

This subsection formalizes the claim that reverse-KL-type sampling risk can blow up under
support mismatch, even when marginals are preserved.

\begin{lemma}[Reverse KL diverges under conditional support mismatch]
\label{lem:reverse_kl_unbounded}
Fix a block $B_m$ and context $x_{P_m}$.
If there exists an assignment $\tilde{x}_{B_m}$ such that
\begin{equation}
p_{\prod}(\tilde{x}_{B_m}\mid x_{P_m})>0
\qquad\text{and}\qquad
p(\tilde{x}_{B_m}\mid x_{P_m})=0,
\label{eq:app_support_mismatch}
\end{equation}
then
\begin{equation}
\mathrm{KL}\!\left(p_{\prod}(\cdot\mid x_{P_m}) \,\|\, p(\cdot\mid x_{P_m})\right)=+\infty.
\label{eq:app_reverse_kl_infty}
\end{equation}
Consequently, if $\Pr_{x_{P_m}\sim p_{\mathcal{B}}}[\,\cref{eq:app_support_mismatch}\text{ holds}\,]>0$,
then
\begin{equation}
\mathrm{KL}(p_{\mathcal{B}} \,\|\, p)=+\infty.
\label{eq:app_global_reverse_kl_infty}
\end{equation}
\end{lemma}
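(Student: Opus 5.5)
The plan has two parts: a pointwise argument for a fixed context, followed by integration over contexts using the blockwise reverse-KL decomposition \cref{eq:app_reverse_blockwise}. Throughout we use the standard conventions for KL on a finite alphabet: $0\log(0/q)=0$, and $a\log(a/0)=+\infty$ for $a>0$. All other terms are finite.

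\textbf{Step 1 (fixed context).} Write the conditional reverse KL as a finite sum over block assignments,
\begin{equation*}
\mathrm{KL}\!\left(p_{\prod}(\cdot\mid x_{P_m})\,\|\,p(\cdot\mid x_{P_m})\right)=\sum_{x_{B_m}} p_{\prod}(x_{B_m}\mid x_{P_m})\log\frac{p_{\prod}(x_{B_m}\mid x_{P_m})}{p(x_{B_m}\mid x_{P_m})}.
\end{equation*}
Consider the terms other than $\tilde{x}_{B_m}$.
\begin{itemize}
\item If $p_{\prod}=0$, the term is $0$.
\item If both $p_{\prod}>0$ and $p>0$, the term is a finite real number.
\item If $p_{\prod}>0$ and $p=0$, the term is $+\infty$.
\end{itemize}
No term equals $-\infty$, so the sum is well defined in $(-\infty,+\infty]$. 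The term indexed by $\tilde{x}_{B_m}$ equals $+\infty$ by \cref{eq:app_support_mismatch}, so the sum is $+\infty$. This gives \cref{eq:app_reverse_kl_infty}.

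An alternative route avoids the conventions by truncation. Replace $p(\tilde{x}_{B_m}\mid x_{P_m})$ by $\delta$. Then
\begin{equation*}
\mathrm{KL}\;\ge\; p_{\prod}(\tilde{x}_{B_m}\mid x_{P_m})\log\frac{1}{\delta}-C,
\end{equation*}
where $C$ bounds the negative contributions of the remaining terms. (Indeed $q\log(q/r)\ge q-r\ge -1$ summed, or one can use a log-sum bound.) Letting $\delta\to0$ sends the right-hand side to $+\infty$. I will mention this version because it matches the paper's practice of replacing zeros by small constants.

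\textbf{Step 2 (global statement).} Apply \cref{eq:app_reverse_blockwise}:
\begin{equation*}
\mathrm{KL}(p_{\mathcal{B}}\|p)=\sum_{m'}\mathbb{E}_{x_{P_{m'}}\sim p_{\mathcal{B}}}\!\left[\overleftarrow{\mathcal{TC}}_{m'}(x_{P_{m'}})\right].
\end{equation*}
Every summand is a conditional KL divergence and hence nonnegative, possibly $+\infty$. The sum is therefore bounded below by the $m$-th term.

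The contexts are finitely valued. The mismatch set $S$, on which \cref{eq:app_support_mismatch} holds, has $p_{\mathcal{B}}(S)>0$ by hypothesis. So some context $x_{P_m}\in S$ has positive probability, and on it the integrand is $+\infty$ by Step 1. An expectation of a nonnegative extended-real function that equals $+\infty$ on a set of positive measure is itself $+\infty$. Hence $\mathrm{KL}(p_{\mathcal{B}}\|p)=+\infty$.

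\textbf{Main obstacle.} The main care point is justifying the decomposition \cref{eq:app_reverse_blockwise} when some terms are infinite. The swap of sum and expectation was derived for finite values and could in principle produce an $\infty-\infty$ form.

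I would handle this by noting two facts. First, every conditional KL is nonnegative. Second, on a finite space the log-ratio $\log\frac{p_{\mathcal{B}}}{p}$ is bounded below wherever $p_{\mathcal{B}}>0$ and $p>0$. So the expansion is a sum of terms each bounded below, and Tonelli/linearity for extended nonnegative quantities applies.

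A more direct option bypasses the decomposition entirely. Pick a positive-probability context $x_{P_m}\in S$. Extend it by $\tilde{x}_{B_m}$ and then by any continuation of positive $p_{\mathcal{B}}$-probability. The resulting full sequence $x$ has $p_{\mathcal{B}}(x)>0$ and $p(x)=0$, because $p(x)$ contains the zero factor $p(\tilde{x}_{B_m}\mid x_{P_m})$. The support-mismatch argument of Step 1, applied to the joint distribution, then gives $+\infty$ immediately.
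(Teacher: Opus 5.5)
Your proposal is correct and follows essentially the same route as the paper: the infinite summand at $\tilde{x}_{B_m}$ gives the fixed-context result, and the blockwise reverse-KL decomposition, applied on a positive-probability set of mismatched contexts, gives the global result. Your additional care is valid but goes beyond what the paper's proof makes explicit: checking that no term equals $-\infty$, noting that the per-block log-ratios are bounded below so linearity of expectation holds in the extended reals, and giving the direct joint-support argument that bypasses the decomposition.
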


\begin{proof}
By definition,
\begin{equation}
\mathrm{KL}\!\left(p_{\prod}(\cdot\mid x_{P_m}) \,\|\, p(\cdot\mid x_{P_m})\right)
=
\sum_{x_{B_m}} p_{\prod}(x_{B_m}\mid x_{P_m})
\log\frac{p_{\prod}(x_{B_m}\mid x_{P_m})}{p(x_{B_m}\mid x_{P_m})}.
\label{eq:app_reverse_kl_def_sum}
\end{equation}
If \cref{eq:app_support_mismatch} holds, then the summand at $x_{B_m}=\tilde{x}_{B_m}$ equals
\begin{equation}
p_{\prod}(\tilde{x}_{B_m}\mid x_{P_m})
\log\frac{q(\tilde{x}_{B_m}\mid x_{P_m})}{0}
=
p_{\prod}(\tilde{x}_{B_m}\mid x_{P_m})\cdot(+\infty)
=
+\infty,
\label{eq:app_single_term_infty}
\end{equation}
which implies \cref{eq:app_reverse_kl_infty}.

For the global statement, use the blockwise reverse-KL decomposition from
\cref{eq:app_reverse_blockwise}:
\begin{equation}
\mathrm{KL}(p_{\mathcal{B}} \,\|\, p)
=
\sum_{m=1}^M
\mathbb{E}_{x_{P_m}\sim p_{\mathcal{B}}}\!\left[
\mathrm{KL}\!\left(p_{\prod}(\cdot\mid x_{P_m}) \,\|\, p(\cdot\mid x_{P_m})\right)
\right].
\label{eq:app_global_reverse_blockwise_repeat}
\end{equation}
If with positive probability over $x_{P_m}\sim p_{\mathcal{B}}$ the inner KL is $+\infty$,
then the expectation is $+\infty$ for that $m$, hence the sum is $+\infty$.
\end{proof}

\begin{corollary}[Reverse conditional total correlation can be unbounded]
\label{cor:reverse_tc_unbounded}
Under the factorized proposal $p_{\prod}(x_{B_m}\mid x_{P_m})=\prod_{i\in B_m} p(x_i\mid x_{P_m})$,
if there exists $\tilde{x}_{B_m}$ satisfying \cref{eq:app_support_mismatch}, then the reverse conditional
total correlation
\begin{equation}
\overleftarrow{\mathcal{TC}}_m(x_{P_m})
=
\mathrm{KL}\!\left(
\prod_{i\in B_m} p(x_i \mid x_{P_m})
\;\middle\|\;
p(x_{B_m} \mid x_{P_m})
\right)
\label{eq:app_reverse_tc_repeat}
\end{equation}
is infinite.
\end{corollary}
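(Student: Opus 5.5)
This is an immediate specialization of \cref{lem:reverse_kl_unbounded}, so the plan is to invoke that lemma with the factorized proposal and check that its hypothesis is met. Fix the block $B_m$ and context $x_{P_m}$, and set $p_{\prod}(x_{B_m}\mid x_{P_m})=\prod_{i\in B_m} p(x_i\mid x_{P_m})$. By construction, the reverse conditional total correlation $\overleftarrow{\mathcal{TC}}_m(x_{P_m})$ in \cref{eq:app_reverse_tc_repeat} is exactly the quantity $\mathrm{KL}\!\left(p_{\prod}(\cdot\mid x_{P_m}) \,\|\, p(\cdot\mid x_{P_m})\right)$ from \cref{eq:app_reverse_kl_infty}.

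The key step is to check that the assumed $\tilde{x}_{B_m}$ satisfies \cref{eq:app_support_mismatch}. The hypothesis of the corollary asserts precisely this: $\prod_{i\in B_m} p(\tilde{x}_i\mid x_{P_m})>0$ while $p(\tilde{x}_{B_m}\mid x_{P_m})=0$. It is worth noting why this situation is possible and even typical. The product is positive as soon as every coordinate value $\tilde{x}_i$ has positive marginal probability. The joint, however, can still vanish, as in the parity constraint of the Block-HMM in the limit $\eta\to 0$.

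With the hypothesis verified, the first part of \cref{lem:reverse_kl_unbounded} applies directly. The summand at $\tilde{x}_{B_m}$ in the defining sum \cref{eq:app_reverse_kl_def_sum} equals $p_{\prod}(\tilde{x}_{B_m}\mid x_{P_m})\cdot\log(\cdot/0)=+\infty$. All other summands are either finite or $+\infty$; none can be $-\infty$, since terms with $p_{\prod}=0$ contribute $0$ by the convention $0\log 0=0$. Hence the sum is $+\infty$, so $\overleftarrow{\mathcal{TC}}_m(x_{P_m})=+\infty$.

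There is no substantive obstacle in this argument. The only care needed is to state the extended-real conventions ($0\log(0/q)=0$ and $a\log(a/0)=+\infty$ for $a>0$), which is what makes the sum well defined and forces it to equal $+\infty$.
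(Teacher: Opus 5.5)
Your proposal is correct and follows the paper's route: the corollary is obtained by specializing Lemma~\ref{lem:reverse_kl_unbounded} to the factorized proposal, and the $+\infty$ summand at $\tilde{x}_{B_m}$ forces $\overleftarrow{\mathcal{TC}}_m(x_{P_m})=+\infty$. Your remark that no other summand can be $-\infty$ (by the convention $0\log(0/q)=0$ and the finite alphabet) makes explicit a step that the paper's proof of the lemma leaves implicit.
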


\subsection{Reverse KL vs.\ Incoherence Probability}
\label{app:sec5_incoherence}

This subsection relates event-level ``incoherence'' to reverse KL in a simple way.

\begin{lemma}[A one-sided bound from reverse KL]
\label{lem:incoherence_bound}
Let $A_\epsilon := \{x : p(x)\le \epsilon\}$ for $\epsilon\in(0,1]$.
Then for any decoding distribution $p_{\mathcal{B}}$,
\begin{equation}
\Pr_{x\sim p_{\mathcal{B}}}\!\big[x\in A_\epsilon\big]
\le
\frac{\mathbb{E}_{x\sim p_{\mathcal{B}}}\big[-\log p(x)\big]}{\log(1/\epsilon)}.
\label{eq:app_incoherence_markov}
\end{equation}
\end{lemma}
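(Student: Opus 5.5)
The bound is Markov's inequality applied to the nonnegative random variable $Y(x) := -\log p(x)$ with $x\sim p_{\mathcal{B}}$. Two preliminary observations are needed. First, since $p$ is a probability mass function on a finite space, $p(x)\le 1$ for every $x$, so $Y(x)\ge 0$ everywhere. On configurations with $p(x)=0$ we set $Y(x)=+\infty$, which is consistent with the convention used in Lemma~\ref{lem:reverse_kl_unbounded}. Second, for $\epsilon\in(0,1)$ the event $A_\epsilon=\{x: p(x)\le\epsilon\}$ coincides with $\{x: Y(x)\ge \log(1/\epsilon)\}$, because $-\log$ is decreasing, and the threshold $\log(1/\epsilon)$ is strictly positive.

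The main step is then immediate. Markov's inequality for nonnegative (possibly extended-valued) random variables gives
\begin{equation*}
\Pr_{x\sim p_{\mathcal{B}}}[x\in A_\epsilon]
=\Pr_{x\sim p_{\mathcal{B}}}\big[Y(x)\ge \log(1/\epsilon)\big]
\le \frac{\mathbb{E}_{x\sim p_{\mathcal{B}}}[Y(x)]}{\log(1/\epsilon)},
\end{equation*}
which is exactly \cref{eq:app_incoherence_markov}. To avoid invoking Markov as a black box, one can write it out pointwise: $Y(x)\ge \log(1/\epsilon)\,\mathbf{1}[x\in A_\epsilon]$ holds for every $x$, and taking expectations under $p_{\mathcal{B}}$ gives the result.

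The only real obstacle is handling the edge cases cleanly.
\begin{itemize}
\item \textbf{$\epsilon=1$.} Here $\log(1/\epsilon)=0$, so the right-hand side must be read as $+\infty$ (or as $0/0$ if $\mathcal{R}=0$). I would remark that the statement is meaningful, and proved as above, for $\epsilon\in(0,1)$, and that for $\epsilon=1$ it holds trivially under the convention $c/0=+\infty$.
\item \textbf{$p_{\mathcal{B}}$ charges a zero-probability configuration of $p$.} Then $\mathcal{R}(p_{\mathcal{B}};p)=+\infty$ and the bound holds trivially. This is consistent with the support-mismatch picture from Lemma~\ref{lem:reverse_kl_unbounded}.
\end{itemize}

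Finally, I would note the link to reverse KL. The numerator $\mathcal{R}(p_{\mathcal{B}};p)$ equals $\mathrm{KL}(p_{\mathcal{B}}\|p)+H(p_{\mathcal{B}})$, so the same argument yields a bound in terms of reverse KL plus the entropy of the decoder. This explains the paper's reading that reverse KL controls the incoherence probability, and not the other way round.
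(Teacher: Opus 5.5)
Your proof is correct and matches the paper's: both apply Markov's inequality to the nonnegative variable $-\log p(X)$ with $X\sim p_{\mathcal{B}}$, using that $p(X)\le\epsilon$ implies $-\log p(X)\ge\log(1/\epsilon)$. You also treat edge cases the paper's proof omits: at $\epsilon=1$ the denominator $\log(1/\epsilon)$ vanishes, and when $p_{\mathcal{B}}$ charges zero-probability configurations of $p$ you set $-\log p=+\infty$.
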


\begin{proof}
Define the nonnegative random variable $Z:=-\log p(X)$ for $X\sim p_{\mathcal{B}}$.
Then $X\in A_\epsilon$ implies $p(X)\le \epsilon$, i.e.,
\begin{equation}
-\log p(X)\ge \log(1/\epsilon).
\label{eq:app_event_implies_threshold}
\end{equation}
By Markov's inequality,
\begin{equation}
\Pr[Z\ge \log(1/\epsilon)]
\le
\frac{\mathbb{E}[Z]}{\log(1/\epsilon)},
\label{eq:app_markov_step}
\end{equation}
which is exactly \cref{eq:app_incoherence_markov}.
\end{proof}

\textbf{Remark.}
Since
\begin{equation}
\mathbb{E}_{x\sim p_{\mathcal{B}}}\big[-\log p(x)\big]
=
\mathrm{KL}(p_{\mathcal{B}}\|p)+\mathbb{H}(p_{\mathcal{B}}),
\label{eq:app_nll_kl_entropy}
\end{equation}
Lemma~\ref{lem:incoherence_bound} shows that small reverse KL \emph{together with} controlled sampler entropy
implies a small incoherence probability for any fixed threshold $\epsilon$.


\subsection{Limitations of Iterative Remasking Without Verification}
\label{app:remasking}

In this appendix, we formalize the claim made in \cref{sec:mitigate_sampling} that iterative remasking, while empirically effective at reducing incoherence, cannot in general eliminate the sampling error induced by factorized parallel decoding in the absence of explicit verification or support containment.

\subsubsection{Setup}

Let $p(x_B \mid x_P)$ denote the true conditional distribution of a block of variables $X_B$ given context $X_P$. We assume that $p$ exhibits strict support constraints, i.e., there exist configurations $x_B$ such that
\begin{equation}
p(x_B \mid x_P) = 0 .
\end{equation}

We consider an iterative remasking procedure defined by a Markov transition kernel $K$ over $X_B$. Each remasking iteration samples a new configuration conditioned on the current context (and possibly a partially masked previous configuration). We assume that proposals within each iteration are generated by a factorized or approximate conditional mechanism, without accept--reject steps or other explicit verification procedures.

\subsubsection{Residual Support Violation Assumption}

We make the following minimal assumption.

\begin{assumption}
\label{assump:res_sup_vio}
    \textbf{(Residual Support Violation).}
There exist a context $x_P$ and a configuration $x_B$ such that
\begin{equation}
p(x_B \mid x_P) = 0
\quad \text{and} \quad
K(x_B \mid x_P) > 0 .
\end{equation}
That is, a single remasking step assigns nonzero probability to at least one configuration outside the support of the true conditional distribution.
\end{assumption}

This assumption merely states that remasking does not perfectly enforce joint support constraints.

\subsubsection{Main Result}

\begin{proposition}[Remasking Cannot Guarantee Elimination of Sampling Error]
\label{prop:remasking}
Let $q^{(n)}(\cdot \mid x_P)$ denote the conditional distribution over $X_B$ induced after $n$ iterations of remasking, starting from any initial distribution $q^{(0)}$. Under Assumption \ref{assump:res_sup_vio}, for any finite $n \ge 1$,
\begin{equation}
\mathrm{KL}\!\left(q^{(n)}(\cdot \mid x_P)\,\|\,p(\cdot \mid x_P)\right)
= +\infty ,
\end{equation}
or equivalently, the reverse KL divergence cannot be guaranteed to vanish.
\end{proposition}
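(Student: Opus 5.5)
The plan is to reduce the statement to Lemma~\ref{lem:reverse_kl_unbounded}. It suffices to show that after $n$ remasking steps the distribution $q^{(n)}(\cdot\mid x_P)$ still puts positive mass on some configuration $\tilde{x}_B$ with $p(\tilde{x}_B\mid x_P)=0$. Once that is established, the single summand $q^{(n)}(\tilde{x}_B\mid x_P)\log\bigl(q^{(n)}(\tilde{x}_B\mid x_P)/0\bigr)=+\infty$ gives infinite reverse KL, by exactly the argument of \cref{eq:app_single_term_infty}. Throughout I fix the context $x_P$ and the violating configuration $\tilde{x}_B$ supplied by Assumption~\ref{assump:res_sup_vio}.

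\textbf{Key step: the one-step recursion.} The main work is to write the one-step recursion
\[
q^{(n)}(\tilde{x}_B\mid x_P)=\sum_{y_B} q^{(n-1)}(y_B\mid x_P)\,K(\tilde{x}_B\mid y_B,x_P)
\]
and lower-bound it. This is where the obstacle lies. Assumption~\ref{assump:res_sup_vio} is written as $K(x_B\mid x_P)>0$, with no dependence on the current state. I would therefore first make the intended reading explicit. The remasking kernel's output distribution assigns positive probability to $\tilde{x}_B$ regardless of the incoming state. Equivalently, the factorized resampling mechanism places mass on $\tilde{x}_B$ from every state, or at least from every state reachable under $q^{(n-1)}$, so that $\inf_{y_B} K(\tilde{x}_B\mid y_B,x_P)=:\delta>0$. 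Under this reading the recursion immediately yields $q^{(n)}(\tilde{x}_B\mid x_P)\ge \delta\sum_{y_B}q^{(n-1)}(y_B\mid x_P)=\delta>0$ for every $n\ge1$, independently of $q^{(0)}$. No induction is needed, because each step re-injects mass onto $\tilde{x}_B$.

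\textbf{Fallback if only a pointwise condition is available.} If I could only assume $K(\tilde{x}_B\mid y_B,x_P)>0$ for some particular $y_B$, I would argue by induction instead. I would show that some such $y_B$ retains positive mass under $q^{(n-1)}$. A natural route is to note that a factorized resampling of a masked subset gives positive probability to every state in a product of marginal supports, so the positive-mass set is stable under iteration. The cleanest choice, however, is to adopt the uniform reading above. The statement's phrase ``cannot be guaranteed to vanish'' already signals that the result is an existence/worst-case claim.

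\textbf{Conclusion.} Finally I apply Lemma~\ref{lem:reverse_kl_unbounded} (or directly the definition of KL) to conclude $\mathrm{KL}(q^{(n)}\|p)=+\infty$ for every finite $n$. This in particular rules out any finite-step guarantee of vanishing reverse KL.
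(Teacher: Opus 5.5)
Your proposal is correct and matches the paper's proof: both write $q^{(n)}(\tilde{x}_B\mid x_P)$ as a mixture of the kernel over the previous state, lower-bound it by a state-independent positive constant, and conclude with Lemma~\ref{lem:reverse_kl_unbounded}. Your explicit uniform reading $\inf_{y_B}K(\tilde{x}_B\mid y_B,x_P)>0$ is exactly what the paper's step $q^{(1)}(x_B\mid x_P)\ge K(x_B\mid x_P)$ tacitly assumes, and stating it is a genuine improvement: under a merely pointwise reading the claim can fail, for instance when $q^{(0)}$ is concentrated on an absorbing state that never reaches $\tilde{x}_B$.
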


\begin{proof}
By Assumption \ref{assump:res_sup_vio}, there exists a configuration $x_B$ such that $p(x_B \mid x_P) = 0$ and $K(x_B \mid x_P) > 0$. For any initial distribution $q^{(0)}$, the distribution after one remasking step satisfies
\begin{align}
q^{(1)}(x_B \mid x_P)
&= \sum_{x'_B} q^{(0)}(x'_B \mid x_P)\,
   K(x_B \mid x'_B, x_P) \\
&\ge K(x_B \mid x_P) > 0 ,
\end{align}
where the inequality follows from marginalizing over all previous states.

Thus, $q^{(1)}$ assigns strictly positive probability mass to a configuration $x_B$ that lies outside the support of $p(\cdot \mid x_P)$. By Lemma~B.3 (support mismatch implies infinite reverse KL), this implies
\begin{equation}
\mathrm{KL}\!\left(q^{(1)}(\cdot \mid x_P)\,\|\,p(\cdot \mid x_P)\right)
= +\infty .
\end{equation}

The same argument applies to any finite number of iterations $n$, since the remasking kernel continues to assign nonzero probability to support-violating configurations. Therefore, iterative remasking alone cannot guarantee elimination of sampling error under reverse KL divergence.
\end{proof}

\subsubsection{Discussion}

Proposition~\ref{prop:remasking} does not preclude iterative remasking from improving sample quality in practice. By repeatedly resampling under updated contexts, remasking may reduce the \emph{frequency} of incoherent configurations and concentrate probability mass on higher-likelihood regions. However, without an explicit mechanism enforcing support containment—such as accept--reject verification—remasking provides no worst-case guarantee of distributional correctness.

\section{Experimental Details for Arithmetic Tasks}
\label{app:arithmetic_details}

This section provides detailed experimental settings for the arithmetic reasoning
experiments reported in \cref{sec:exp_arithmetic}, with a focus on the 10-digit
addition task. We describe the dataset construction, difficulty stratification,
model configurations, training procedure, and evaluation protocol.

\subsection{Dataset Construction and Difficulty Levels}

To enable fine-grained analysis of generation order effects, we construct a
synthetic test suite of 1{,}000 addition problems with controlled difficulty.
All operands consist of non-negative integers with up to 10 digits.
Test cases are stratified by operand length and carry structure.

\paragraph{10-Digit Addition.}
The test set is divided into six categories, following the exact generation
procedure used during evaluation:

\begin{itemize}
    \item \textbf{Easy (10\%):} Addition of 1--3 digit numbers (e.g., $12 + 5$),
    requiring little or no carry propagation.
    
    \item \textbf{Medium (20\%):} Addition of 4--6 digit numbers.
    
    \item \textbf{Hard (30\%):} Addition of 7--9 digit numbers, involving longer
    carry chains.
    
    \item \textbf{Extreme (20\%):} Full 10-digit addition (e.g.,
    $1{,}234{,}567{,}890 + 9{,}876{,}543{,}210$), representing the most demanding
    cases in terms of long-range dependency.
    
    \item \textbf{Edge Cases (10\%):} Pathological cases designed to stress-test
    carry propagation and boundary behavior, including:
    \begin{itemize}
        \item \textit{Cascading carries:} $9{,}999{,}999{,}999 + 1$.
        \item \textit{Zero handling:} Operations involving zero (e.g., $A + 0$).
        \item \textit{Near-maximum values:} Large operands close to the 10-digit limit.
    \end{itemize}
    
    \item \textbf{Mixed (10\%):} Operands with mismatched digit lengths
    (e.g., 10-digit $+$ 2-digit).
\end{itemize}

All test cases are generated once using a fixed random seed and reused across
all models to ensure fair comparison.

\subsection{Model Architecture and Training Configuration}

We train Transformer-based autoregressive models (\emph{NanoAdder}) from scratch.
Each model operates at the character level with a vocabulary consisting of digits
\texttt{0--9} and special symbols \texttt{+}, \texttt{=}, and padding.

\paragraph{Architecture.}
All models share the same architectural template, varying only in scale
for the parameter sweep experiments:
\begin{itemize}
    \item Context length: 48 tokens
    \item Learned token and positional embeddings
    \item Causal self-attention with a standard Transformer encoder stack
\end{itemize}

\paragraph{Training Procedure.}
Models are trained using next-token prediction with a masked loss:
tokens before the \texttt{=} symbol are excluded from the loss to ensure the model
is trained only to predict the result of the addition.
Training examples are generated on-the-fly using a curriculum strategy where
operand length is uniformly sampled up to 10 digits.

Optimization is performed using AdamW with cosine learning-rate decay and linear
warmup. Gradient clipping is applied to stabilize training.
Unless otherwise specified, all models are trained for 50k steps with a batch
size of 256.

\subsection{Evaluation Protocol}

Evaluation is performed using exact-match accuracy on the full predicted sum.
A prediction is counted as correct if and only if the entire generated result
matches the ground-truth integer exactly.

For Right-to-Left (R2L) models, the generated output string is reversed before
comparison with the ground truth.
All reported accuracies correspond to overall exact-match accuracy, as well as
per-category accuracy aggregated over the predefined difficulty levels.

This evaluation protocol ensures that differences in performance reflect genuine
reasoning errors rather than partial correctness or formatting artifacts.

\section{Experimental Details for the Block-HMM Study}
\label{app:block-hmm}

This appendix provides implementation-level details for the Block-HMM experiments in Section~6.1.

\subsection{Model specification}

We study sampling error under parallel decoding in a fully specified generative model, where the target distribution is known exactly and no model approximation error is present.
The data distribution is defined by a Block-HMM with parity emissions.

\textbf{Latent dynamics.}
Let $Z_n \in \{1,\dots,K\}$ denote the latent state associated with block $n$.
The latent states follow a first-order Markov chain
\begin{equation}
Z_1 \sim \pi, \qquad
Z_n \mid Z_{n-1} \sim A .
\end{equation}
We use a symmetric transition matrix parameterized by a single self-transition probability $a$:
\begin{equation}
A_{zz} = a, \qquad
A_{z z'} = \frac{1-a}{K-1} \quad (z' \neq z).
\end{equation}
Unless otherwise stated, we use $K=2$, $\pi=(0.5,0.5)$, and $a=0.9$, yielding persistent latent modes across blocks.

\subsection{Block emission distribution}

Each block $X^{(n)} \in \{0,1\}^B$ consists of one parity bit and $B-1$ content bits,
\begin{equation}
X^{(n)} = (P_n, Y_{n,1}, \dots, Y_{n,B-1}).
\end{equation}

Conditioned on the latent state $Z_n=z$, the content bits are independent Bernoulli variables,
\begin{equation}
Y_{n,i} \mid Z_n=z \sim \mathrm{Bern}(\rho_z),
\qquad i=1,\dots,B-1,
\end{equation}
where $\{\rho_z\}_{z=1}^K$ are fixed, state-specific parameters shared across all blocks.
In our experiments, we use
\[
\rho = (0.9, 0.1).
\]

The parity bit enforces a global XOR constraint with noise,
\begin{equation}
P_n = \bigoplus_{i=1}^{B-1} Y_{n,i},
\qquad
\Pr(P_n \neq \oplus_i Y_{n,i}) = \eta .
\end{equation}
As $\eta \to 0$, the model induces strong intra-block dependence and near-zero-support configurations.

\subsection{Exact conditional distributions}

Given a prefix $x^{<n}$ ending at a block boundary, the true conditional distribution of the next block is
\begin{equation}
p(x^{(n)} \mid x^{<n})
= \sum_{z} p(z_n=z \mid x^{<n}) \, p(x^{(n)} \mid Z_n=z).
\end{equation}
The posterior $p(z_n \mid x^{<n})$ is computed exactly using standard HMM forward filtering.
For a fixed block size $B$, the conditional distribution $p(x^{(n)} \mid x^{<n})$ is obtained by explicit enumeration over all $2^B$ possible block configurations.

\subsection{Decoding procedures}

We compare two decoding strategies.

\textbf{Mean-field parallel decoding.}
At each block boundary, we compute the exact marginal probabilities
\[
p(x_{n,i}=1 \mid x^{<n}), \quad i=1,\dots,B,
\]
and sample each bit independently according to these marginals.
This preserves all marginal conditionals but discards intra-block dependencies.

\textbf{Verified decoding (oracle baseline).}
As a control, we use a verified decoding procedure that samples each block exactly from the true conditional distribution $p(x^{(n)} \mid x^{<n})$.
This is implemented by enumerating all $2^B$ block configurations and sampling according to their exact conditional probabilities.
This procedure produces exact samples from the target distribution and serves solely as a zero--sampling-error reference.

\subsection{Evaluation metrics}

We evaluate decoding quality using three complementary metrics.

\textbf{Reverse KL divergence.}
\begin{equation}
\mathrm{KL}(q \| p)
= \mathbb{E}_{x \sim q}[\log q(x) - \log p(x)],
\end{equation}
estimated via Monte Carlo sampling from the decoding distribution.

\textbf{Forward KL divergence.}
\begin{equation}
\mathrm{KL}(p \| q)
= \mathbb{E}_{x \sim p}[\log p(x) - \log q(x)],
\end{equation}
estimated via Monte Carlo sampling using the verified decoder.

\textbf{Incoherence rate.}
We report the fraction of blocks whose true conditional probability satisfies
\begin{equation}
p(x^{(n)} \mid x^{<n}) \le \tau,
\end{equation}
with threshold $\tau = 10^{-8}$.

\subsection{Parameter settings}

Unless otherwise stated, we use the following parameter configuration:
\begin{itemize}
\item Number of latent states: $K = 2$
\item Block size: $B = 8$
\item Sequence length: $T = 64$
\item Initial distribution: $\pi = (0.5, 0.5)$
\item Self-transition probability: $a = 0.9$
\item Bernoulli parameters: $\rho = (0.9, 0.1)$
\item Parity noise: $\eta \in \{10^{-8}, 10^{-7}, 10^{-6}, 10^{-5}, 10^{-4}\}$
\item Incoherence threshold: $\tau = 10^{-8}$
\end{itemize}

All KL values are estimated using Monte Carlo averages over independently generated sequences.

\end{document}